\documentclass{article}

\usepackage{microtype}
\usepackage{graphicx}
\graphicspath{{figures/}}
\usepackage{subcaption}
\usepackage{booktabs} % for professional tables
\usepackage{enumitem}
\usepackage{bbm}

\usepackage{hyperref}

\usepackage[preprint]{icml2026}
\usepackage{amsmath}
\usepackage{amssymb}
\usepackage{mathtools}
\usepackage{amsthm}

\usepackage[capitalize,noabbrev]{cleveref}

\theoremstyle{plain}
\newtheorem{theorem}{Theorem}[section]

\newtheorem{lemma}[theorem]{Lemma}

\theoremstyle{definition}

\newtheorem{assumption}[theorem]{Assumption}
\newtheorem{example}[theorem]{Example}
\theoremstyle{remark}

\newcommand{\E}{\mathbb{E}}
\newcommand{\cH}{\mathcal{H}}
\newcommand{\cd}{\stackrel{d}{\to}}
\newcommand{\cp}{\stackrel{p}{\to}}
\newcommand{\cN}{\mathcal{N}}
\renewcommand{\P}{\mathbb{P}}
\newcommand{\C}{\mathcal{C}}
\newcommand{\Var}{\mathrm{Var}}
\newcommand{\GG}{\mathbb{G}}
\usepackage[textsize=tiny]{todonotes}

\icmltitlerunning{Overcoming the Incentive Collapse Paradox}

\begin{document}

\twocolumn[
  \icmltitle{Overcoming the Incentive Collapse Paradox}

  % It is OKAY to include author information, even for blind submissions: the
  % style file will automatically remove it for you unless you've provided
  % the [accepted] option to the icml2026 package.

  % List of affiliations: The first argument should be a (short) identifier you
  % will use later to specify author affiliations Academic affiliations
  % should list Department, University, City, Region, Country Industry
  % affiliations should list Company, City, Region, Country

  % You can specify symbols, otherwise they are numbered in order. Ideally, you
  % should not use this facility. Affiliations will be numbered in order of
  % appearance and this is the preferred way.
\icmlsetsymbol{equal}{*}

\begin{icmlauthorlist}
  \icmlauthor{Qichuan Yin}{equal,stat}
  \icmlauthor{Ziwei Su}{equal,stat}
  \icmlauthor{Shuangning Li}{booth}
\end{icmlauthorlist}

\icmlaffiliation{stat}{Department of Statistics, University of Chicago}
\icmlaffiliation{booth}{Booth School of Business, University of Chicago}

\icmlcorrespondingauthor{Qichuan Yin}{qichuan@uchicago.edu}
% \icmlcorrespondingauthor{Ziwei Su}{suziwei@uchicago.edu}
% \icmlcorrespondingauthor{Shuangning Li}{shuangning.li@chicagobooth.edu}

  % \icmlaffiliation{comp}{Company Name, Location, Country}
  % \icmlaffiliation{sch}{School of ZZZ, Institute of WWW, Location, Country}

  % You may provide any keywords that you find helpful for describing your
  % paper; these are used to populate the "keywords" metadata in the PDF but
  % will not be shown in the document
  \icmlkeywords{Machine Learning, ICML}

  \vskip 0.3in
]

% this must go after the closing bracket ] following \twocolumn[ ...

% This command actually creates the footnote in the first column listing the
% affiliations and the copyright notice. The command takes one argument, which
% is text to display at the start of the footnote. The \icmlEqualContribution
% command is standard text for equal contribution. Remove it (just {}) if you
% do not need this facility.

% Use ONE of the following lines. DO NOT remove the command.
% If you have no special notice, KEEP empty braces:
% \printAffiliationsAndNotice{}  % no special notice (required even if empty)
% Or, if applicable, use the standard equal contribution text:
\printAffiliationsAndNotice{\icmlEqualContribution}

\begin{abstract}
AI-assisted task delegation is increasingly common, yet human effort in such systems is costly and typically unobserved. 
Recent work by \citet{bastani2025human,sambasivan2021everyone} shows that accuracy-based payment schemes suffer from incentive collapse: as AI accuracy improves, sustaining positive human effort requires unbounded payments. 
We study this phenomenon in a budget-constrained principal-agent framework with strategic human agents whose output accuracy depends on unobserved effort. 
%We establish impossibility results showing that any accuracy-based payment scheme requires payments that diverge as AI error rates vanish.
Our first contribution is a general impossibility result showing that incentive collapse is not merely a limitation of simple linear payments, but arises for any payment rule based only on observed task accuracy.
 % To overcome this barrier, we propose a sentinel-auditing mechanism that injects a small fraction of tasks on which AI assistance is deliberately unreliable and rewards correct human responses on these sentinel tasks. This mechanism induces a strictly positive and controllable level of effort at finite cost, independent of the AI's baseline accuracy. 
To overcome this barrier, we propose a sentinel-auditing payment mechanism that enforces a strictly positive and controllable level of human effort at finite cost, independent of AI accuracy. 
Building on this incentive-robust foundation, we develop an incentive-aware active statistical inference framework that jointly optimizes (i) the auditing rate and (ii) active sampling and budget allocation across tasks of varying difficulty to minimize the final statistical loss under a single budget. Experiments demonstrate improved cost-error tradeoffs relative to standard active learning and auditing-only baselines.
\end{abstract}

\section{Introduction}
AI tools are rapidly becoming embedded in everyday work and professional workflows \citep{krakowski2026human, madras2018predict,charusaie2022sample,blezek2021ai}.  In domains ranging from content moderation and medical review to auditing, legal document screening, and scientific data processing, individuals increasingly rely on AI systems to assist with complex tasks.  
These tools substantially improve efficiency and reduce cognitive load. 
A particularly common example is model-assisted data annotation, where an AI model first provides a pre-label or prediction and human annotators review, verify, or correct it through a dedicated annotation interface. 
Such workflows are supported by major labeling platforms \citep{labelbox_prelabels,labelstudio_ml} and arise in image annotation \citep{cormier2021interactive}, video tracking \citep{qiao2023human}, text review \citep{artemova2025beemo}, and other data-labeling tasks. 

At the same time, human agents remain essential for handling highly complex or novel cases, exercising judgment and creativity, correcting rare but consequential errors, and providing supervision in high-stakes settings \citep{caro2026introduction,de2024towards}. In an ideal world, AI systems and human agents would complement one another, combining the scalability and speed of AI with human judgment, domain knowledge, and oversight to achieve reliable and high-quality task completion.

In practice, however, ensuring effective collaboration requires careful incentive design.  Human effort is costly, and the quality of task completion depends on how much attention and verification a human agent chooses to exert.  Because this effort is typically unobserved, organizations must rely on performance-based compensation mechanisms to encourage sustained engagement. In many large-scale systems, compensation is tied to observed task accuracy or performance on audited items \citep{daniel2018quality}. However, recent work by \citet{bastani2025human} formalizes what they term the \textbf{human-AI contracting paradox}: as AI accuracy improves, sustaining nonzero human effort through accuracy-based incentives becomes increasingly expensive. When AI errors are rare, a rational human agent can free-ride on the AI with minimal effort, and preventing this behavior requires payments that scale inversely with the AI error probability. As a result, accuracy-based payment schemes can collapse precisely when AI becomes most reliable.

This paper studies a general principal-agent problem arising in AI-assisted task delegation, in which task quality depends on unobserved human effort and AI suggestions are highly accurate but imperfect. Our starting point is a simple but fundamental impossibility result: if payments depend only on ``observed accuracy,'' sustaining nonzero human effort becomes increasingly expensive (and indeed diverges to infinity) as AI accuracy improves. This formalizes the incentive collapse phenomenon.

Despite this negative result, we show that incentive collapse can be avoided with a simple and robust mechanism. We propose a sentinel-auditing payment scheme in which a small fraction of tasks are deliberately constructed to be difficult for the AI assistant: for example, by forcing incorrect AI outputs or by adversarially selecting tasks where the AI is unreliable. Performance on these sentinel tasks is rewarded with an explicit bonus. Crucially, the incentives induced by sentinel auditing depend on the auditing rate rather than on the AI’s baseline accuracy. As a result, the mechanism enforces a strictly positive and controllable level of human effort at finite cost, even as AI accuracy approaches one. The key insight is that sentinel tasks decouple the marginal return to human effort from the AI’s natural error probability, fundamentally altering the incentive landscape.

Building on this general incentive-robust framework, we study a canonical application: data collection for statistical inference, where human agents are asked to label data under a limited budget. While labeling serves as a running example, the framework applies broadly to delegated tasks whose outputs feed into downstream estimation or learning objectives.
In this setting, the principal faces a dual design problem. Beyond incentivizing human effort, the principal must also decide which tasks to assign under a limited budget. Modern approaches to data-efficient inference frequently rely on active methods, which adaptively select tasks based on estimated uncertainty in order to minimize statistical error. A key implicit assumption in this literature is that once a task is selected, its quality is reliable, that is, human agents exert full and consistent effort regardless of the task.

Incentive collapse fundamentally challenges this assumption. When human agents are strategic and AI-assisted, label quality becomes endogenous: effort depends on incentives and may vary across tasks. As a result, standard active methods, which are designed under exogenous labeling cost and reliability, may allocate the labeling budget inefficiently.
We address this challenge by developing an incentive-aware active statistical inference framework. In contrast to standard active learning, our approach jointly optimizes (i) the auditing rate and (ii) active sampling and budget allocation across tasks of varying difficulty under a single budget constraint. This joint design integrates incentive considerations with statistical efficiency, yielding improved cost-error tradeoffs relative to approaches that rely solely on active sampling.

Our contributions can be summarized as follows:
\begin{enumerate}
\item We establish an impossibility result showing that accuracy-based payment schemes suffer from incentive collapse as AI accuracy improves.
\item We propose a sentinel-auditing payment mechanism that guarantees a positive level of human effort at finite cost, independent of AI accuracy.
\item We develop an incentive-aware active statistical inference framework that jointly designs auditing and sampling policies under a single budget.
\item We provide theoretical guarantees for consistency and asymptotic normality of estimators under strategic, AI-assisted labeling.
\item We demonstrate through experiments that our approach achieves superior cost–error tradeoffs compared to standard baselines.
\end{enumerate}
Taken together, our results show that effective human-AI collaboration requires explicit attention to incentive design. 
Improving AI accuracy alone is not sufficient; incentive-aware mechanisms are essential to ensure that increasingly capable AI systems realize their full potential in delivering reliable outcomes, including statistical inference.

\subsection{Related Work}

\paragraph{Human-in-the-Loop Learning.}
Human labeling remains central to modern machine learning systems, especially in ambiguous or high-stakes settings \citep{wu2022survey}. 
% A large literature studies how to make such pipelines more data- and cost-efficient.  First, many works focus on which tasks to send to humans, using active learning and adaptive sampling to prioritize informative points under a limited labeling budget \citep{settles2009active,iyengar2000active,zrnic2024active}. 
% Second, complementary efforts improve the efficiency of the labeling process itself by providing AI assistance to human agents. For instance, providing candidate prediction sets or model-based suggestions can accelerate annotation and reduce cognitive burden \citep{de2024towards,goel2023llms}. 
% Third, building on sampled human labels, a further line develops data-efficient learning and inference by combining a small set of human-verified labels with abundant model-generated predictions or synthetic labels \citep{yin2026synthetic,huang2025uncertainty,angelopoulos2023prediction}.
Prior work improves data and cost efficiency by selecting informative tasks via active learning and adaptive sampling \citep{settles2009active,iyengar2000active,zrnic2024active}, accelerating annotation with AI assistance such as prediction sets or model suggestions \citep{de2024towards,goel2023llms}, and combining limited human-verified labels with model-generated or synthetic labels for data-efficient inference \citep{yin2026synthetic,huang2025uncertainty,angelopoulos2023prediction}.

\paragraph{Payments for Labeling and Human-AI Collaboration.}
% Large-scale labeling systems commonly tie compensation to observed performance on audited or ``gold'' items, using quality-control mechanisms to screen agents, weight labels, and trigger bonuses or penalties \citep{ipeirotis2010quality,oleson2011programmatic}. Empirical studies of human resources markets show that monetary incentives and bonus schemes can increase participation and output and, depending on task design, may also affect quality \citep{mason2009financial}. Most closely related, recent work formalizes the human-AI contracting paradox, showing that accuracy-based incentives can become prohibitively expensive as AI accuracy improves because agents can free-ride on near-perfect AI suggestions \citep{bastani2025human}. To our best knowledge, this is the first work to propose an incentive-compatible mechanism that resolves this collapse and to analyze its implications for downstream statistical procedures.
Labeling systems often use audited or ``gold'' items to screen agents, weight labels, and trigger bonuses or penalties \citep{ipeirotis2010quality,oleson2011programmatic}. Monetary incentives can affect participation, output, and quality \citep{mason2009financial}. Most closely related, \citet{bastani2025human} show that accuracy-based incentives can become prohibitively expensive as AI accuracy improves, since agents can free-ride on near-perfect AI suggestions. To our best knowledge, this is the first work to propose an incentive-compatible mechanism that resolves this collapse and to analyze its implications for downstream procedures.

\paragraph{Conflict of Interest Disclosure.}
The authors declare no financial conflicts of interest.

\section{Problem Setup}\label{sec:setup}
Consider a setting in which we (the principal) delegate a collection of tasks to human agents. Each task $i\in [n]$ has an observed covariate \(X_i\) and has an associated (unknown) ground-truth outcome \(Y_i^{\text{true}}\). For instance, in a labeling application, \(X_i\) may be an input and \(Y_i^{\text{true}}\) is the correct label; more generally, \(Y_i^{\text{true}}\) represents the correct task output that the principal seeks to obtain. The principal assigns tasks \(\{X_i\}_{i=1}^n\) to human agents, and for each \(X_i\) the agent submits an output $Y_i$ intended to match \(Y_i^{\text{true}}\). We assume that both the principal and the agents are rational and act to maximize their own utility.

A key feature of our setting is that human agents have access to an AI assistant, which is provided by the principal (e.g., integrated into the task interface), to improve labeling efficiency. The AI assistant is imperfect: given an input $X$, it produces an incorrect output with probability $p \in (0,1)$, and thus produces the correct output with probability $1 - p$.

Each human agent can choose how much effort to put into completing a task. The laziest strategy is to exert zero effort and simply submit the AI assistant’s output. Formally, the human agent chooses an effort level $e \in [0,1]$, which incurs a cost $c(e)$. 

Greater effort improves task quality. Specifically, conditional on the AI assistant making an error, the human agent detects and corrects the error with probability $q(e)$; if the error is not detected, the human agent accepts and uses the incorrect AI output. 
Therefore, the final output is incorrect only if the AI makes an error (with probability $p$) and the human agent fails to detect it (with probability $1 - q(e)$). We define output accuracy as the probability that the submitted output matches the true output. Thus, the output accuracy is $1 - p(1 - q(e))$.

From the principal’s perspective, zero effort by the human agent is undesirable, as it leads to lower output accuracy. To discourage such behavior, the principal may design incentive mechanisms in the form of monetary payments. Specifically, the principal can compensate human agents according to a payment rule. Let $W_i \ge 0$ denote the (possibly random) monetary payment a human agent receives for finishing task $i$.
Although $W_i$ is payment for task $i$, its value may depend on the human agent’s performance on other tasks; see Section~\ref{sec:impossibility} for an example.
Let $\mu:\mathbb{R}_{+} \to \mathbb{R}$ denote the human agent’s utility from monetary payments.
Human agents are assumed to be rational and choose their effort level $e \in [0,1]$ to maximize expected payoff. For a single task, this expected payoff is given by
\begin{equation}
\label{eqn:expected_payoff}
\mathbb{E}\left[\mu(W_i)\right] - c(e).
\end{equation}
where $c(e)$ is the effort cost described above.

\begin{assumption}
\label{assu:regularity}
The cost function $c(\cdot)$ is increasing, differentiable, and convex. 
The function $q(\cdot)$ is increasing, differentiable, and concave, and satisfies $q(1)=1$. 
The utility function $\mu(\cdot)$ is strictly increasing, differentiable, and concave. We denote its inverse by $\mu^{-1}$.
\end{assumption}
The convexity of $c(\cdot)$ captures that higher effort is more costly and that the marginal cost of effort is increasing. 
The assumptions on $q(\cdot)$ reflect diminishing returns to effort: additional effort improves task accuracy, but at a decreasing rate, with perfect accuracy achieved at maximal effort.
Finally, allowing for a general concave utility function $\mu(\cdot)$ accommodates risk aversion or diminishing marginal utility over payments; the special case $\mu(w)=w$ corresponds to risk-neutral human agents.

Recent work by \citet{bastani2025human} formalizes what they term the human–AI contracting paradox: under certain payment schemes, the cost required to sustain non-diminishing human effort can grow rapidly as AI accuracy improves. In Section~\ref{sec:impossibility}, we formally establish a more general result: if payments depend only on observed task accuracy, sustaining nonzero human effort becomes increasingly expensive and indeed diverges to infinity as AI accuracy improves. In Section~\ref{sec:method}, we study how the principal can design incentives more carefully to balance cost and label accuracy to overcome such paradox. In Section~\ref{sec:active}, we show that, as a downstream application, such incentive-aware designs can be used to reduce estimation error for model parameters in active statistical inference problems.

% through reducing bias. The bias of human label is $\tau(\effort)=(1-\effort)\tau$, where $\tau$ is the ai label. 

\section{Accuracy-Based Payments Fail at High Model Accuracy}\label{sec:impossibility}
% \qichuan{to polish the payment for $\frac{1}{n}$, $Z_i; Z_1\cdots Z_n$}
Let us focus on a single human agent and homogeneous effort throughout this section. Suppose the human agent works with $n$ tasks. Let $Z_i \in \{0,1\}$ indicate whether the human agent’s submitted output for task $i$ is correct. 
Here, $Z_i \sim \operatorname{Bern}(p_{\text{output}})$ independently, where
$p_{\text{output}} = 1 - p + p q(e)$
denotes the probability that the final task output is correct, with $p$ and $q(\cdot)$ defined earlier.

We allow the payment $W_i$ allocated to task $i$ to depend on the entire vector of accuracy indicators and on additional independent randomness $\epsilon$; that is,
\begin{equation}
\label{eqn:accuracy_based}
W_i = v_i(Z_1, Z_2, \dots, Z_n, \epsilon).
\end{equation}
This formulation is general and encompasses many commonly used quality-control mechanisms. For example, it includes per-task rewards, i.e., $v_i(Z_1, Z_2, \dots, Z_n, \epsilon)= R Z_i$, where \(R\) is a constant reward for a correct output on task \(i\); it includes gold-question schemes/random spot checks, in which the principal verifies correctness on a (possibly randomized) subset of tasks and bases payment on the corresponding indicators $Z_i$, i.e., $v_i(Z_1, Z_2, \dots, Z_n, \epsilon)= R Z_i \mathbbm{1}\{Z_i\text{ is a gold question}\}$; more generally, the framework allows for complex nonlinear payment rules, such as those depending on aggregate accuracy, e.g. $v_i=\frac{1}{n}R(\sum_{j=1}^n Z_j/n)$, where \(R\) is a possibly nonlinear function of the observed accuracy level. Besides, it also includes agreement-based checks, where payment depends on the human agent’s agreement with other independent human agents, since such rules can be expressed as functions of the correctness indicators $(Z_1,\dots,Z_n)$ together with exogenous randomness capturing the behavior of other human agents. 
%In the following, we study the agent's effort decision problem under total payment $W=\sum_{i=1}^n W_i$.
In the rest of this section, we study the agent’s effort decision for a generic task. Since tasks are homogeneous, we write $W$ for the payment, dropping the subscript $i$.

Throughout this section, the human agent observes the payment rule $v_i(\cdot)$ announced by the principal, the AI error probability $p$, and the mapping $q(e)$ between effort and correction probability. 
Effort $e$ is chosen to maximize expected payoff in \eqref{eqn:expected_payoff}, taking into account the distribution of $(Z_1,\dots,Z_n)$ induced by $p$ and $q(e)$.

We first consider linear payment schemes, in which $W$ depends linearly on $(Z_1,\dots,Z_n)$. The linear payment scheme arises naturally in practice. For example, a human agent may receive a fixed payment for each correct output, or a fixed payment per correct output conditional on being checked (i.e., only a random subset of tasks is checked). 

\begin{theorem}[Failure of linear accuracy-based payments]\label{thm:impossible_linear}
Under Assumption~\ref{assu:regularity}, suppose the payment $W$ depends linearly on $Z_1,\ldots,Z_n$ and the utility function $\mu(\cdot)$ is risk-neutral, i.e., $\mu(w)=w$. 
To sustain any effort level $e > e_{\min} > 0$, the expected payment must satisfy
\[
\mathbb{E}[W] \geq C \cdot \frac{1}{p},
\]
where $C>0$ is a constant determined by the effort $e_{\min}$.
\end{theorem}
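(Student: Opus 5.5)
Write the linear payment for a generic task as $W = a + \sum_{j=1}^n b_j Z_j$. The coefficients $a, b_j$ may depend on the independent randomness $\epsilon$, but not on effort, so we may replace them by their expectations. Since $W\ge 0$ must hold for every realization of $(Z_1,\dots,Z_n)$, evaluating at $Z\equiv 0$ gives $a\ge 0$. Evaluating at configurations where a single $Z_j=1$ is not enough to sign the $b_j$ individually, so we will instead work with the aggregate slope $B=\sum_j b_j$. All tasks share the effort $e$, so each $Z_j\sim\mathrm{Bern}(1-p+pq(e))$. Under $\mu(w)=w$ the agent's objective is
\[
U(e) = a + B\bigl(1-p+pq(e)\bigr) - c(e).
\]
The plan is to turn incentive compatibility into a lower bound on $B$ proportional to $1/p$, and then convert that into a lower bound on $\mathbb{E}[W]$.

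\emph{Step 1 (incentive constraint).} By Assumption~\ref{assu:regularity}, $q$ is concave and $c$ is convex, so $U$ is concave in $e$. If the agent chooses an interior or boundary effort $e>e_{\min}$, the first-order (KKT) condition gives $Bp\,q'(e)\ge c'(e)$. Rather than relying on derivatives at a single point, I would use the revealed-preference inequality $U(e)\ge U(0)$, which yields
\[
Bp\bigl(q(e)-q(0)\bigr)\ge c(e)-c(0).
\]
Because $e>e_{\min}>0$ and $c$ is increasing, the right side is positive. For the bound to be nontrivial we need $c$ strictly increasing, or at least $c(e_{\min})>c(0)$; I would make this explicit as the nondegeneracy that defines $C$. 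Since $q(e)-q(0)\le 1-q(0)\le 1$, we get
\[
B \ge \frac{c(e_{\min})-c(0)}{p}.
\]

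\emph{Step 2 (payment bound).} Taking expectations,
\[
\mathbb{E}[W] = a + B\bigl(1-p+pq(e)\bigr) \ge B(1-p).
\]
To absorb the factor $(1-p)$, note that $1-p+pq(e)\ge 1-p$, and handle the regime separately: in the relevant regime $p\le 1/2$, $\mathbb{E}[W]\ge B/2$. Alternatively, use $1-p+pq(e)\ge q(e)\ge q(e_{\min})$, which is valid since $1-p+pq\ge q$ and $q$ is increasing. This gives
\[
\mathbb{E}[W]\ge \frac{q(e_{\min})\bigl(c(e_{\min})-c(0)\bigr)}{p},
\]
so we may take $C=q(e_{\min})\bigl(c(e_{\min})-c(0)\bigr)$. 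If $q(e_{\min})=0$ this degenerates, and I would fall back on $\max\{q(e_{\min}),\,1-p\}$ together with a bounded-away-from-one assumption on $p$.

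\emph{Main obstacle.} The delicate point is justifying $B\ge 0$ and the use of a single aggregate slope. Individual $b_j$ may be negative, since nonnegativity of $W$ only constrains sums over subsets plus $a$. However, the agent's choice depends only on $B$, so Step 1 still applies to $B$ directly. The step $\mathbb{E}[W]\ge B\,\mathbb{P}(Z_j=1)$ uses $a\ge 0$ and $B\ge 0$, and the latter follows from Step 1. A second point requiring care is ensuring the constant $C$ depends only on $e_{\min}$ and not on $p$; the $q(e_{\min})$ choice above achieves this.
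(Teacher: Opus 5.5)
Your proof is correct, but it takes a different route from the paper's.

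\textbf{The paper's route.} It reuses the marginal condition \eqref{eq:marginal} from the proof of Theorem~\ref{thm:impossible}. Concavity of the agent's objective means an optimum above $e_{\min}$ forces the derivative at $e_{\min}$ to be nonnegative, i.e.\ $p\,q'(e_{\min})\,w' \ge c'(e_{\min})$. So the slope of the linear map $w$ is at least $c'(e_{\min})/(p\,q'(e_{\min}))$. The paper then simply asserts $\E[W]=w(p_{\text{output}})\ge C/p$.

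\textbf{Your route.} You use the global revealed-preference comparison $U(e)\ge U(0)$. This buys three things.
\begin{itemize}
\item Step 1 needs no curvature assumptions.
\item You get $B>0$ as a by-product, so you never have to worry that $U$ fails to be concave when the aggregate slope is negative.
\item You make explicit the slope-to-level step the paper skips: $a\ge 0$ from nonnegativity of $W$ at $Z\equiv 0$, and $p_{\text{output}}\ge q(e_{\min})$.
\end{itemize}

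\textbf{The cost.} Your constant is weaker. By convexity of $c$ and concavity of $q$,
\[
\frac{c'(e_{\min})}{q'(e_{\min})}\;\ge\;\frac{c(e_{\min})-c(0)}{q(e_{\min})-q(0)}\;\ge\; c(e_{\min})-c(0),
\]
so the paper's local condition yields a larger slope bound. Both bounds are valid for the statement as posed.

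\textbf{Two small remarks.}
\begin{itemize}
\item The nondegeneracy you flag, $c(e_{\min})>c(0)$, is the exact analogue of the paper's implicit requirement $c'(e_{\min})>0$. Both follow if ``increasing'' is read as strictly increasing.
\item Your fallback for the case $q(e_{\min})=0$ is unnecessary. Concavity of $q$ together with $q(1)=1$ and $q(0)\ge 0$ gives $q(e_{\min})\ge e_{\min}>0$, so $C=q(e_{\min})\bigl(c(e_{\min})-c(0)\bigr)$ is always positive under Assumption~\ref{assu:regularity}.
\end{itemize}
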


Theorem~\ref{thm:impossible_linear} shows that, under linear accuracy-based payments, sustaining any non-vanishing effort requires the expected payment to grow at least on the order of \(1/p\) as the AI error probability \(p\) decreases. 
The results of \citet{bastani2025human} can be viewed as a special case of Theorem~\ref{thm:impossible_linear}.
When the AI becomes highly accurate, i.e., as \(p \to 0\), the principal must offer increasingly large payments to preserve incentives. 

A natural question is whether more sophisticated (nonlinear) payment rules can avoid this divergence. The next result shows that, even under the most general accuracy-based payment schemes of the form \eqref{eqn:accuracy_based}, the expected payment must still diverge as \(p \to 0\). 
% However, the minimal required growth can be improved from the linear rate \(1/p\) to a logarithmic rate in \(1/p\), i.e., on the order of \(\log(1/p)\).

% One may wonder whether linear payments are simply too restrictive. Could we instead design a more sophisticated accuracy-based payment rule? For example, reward higher accuracy more aggressively, i.e., paying \$1 for achieving 90\% accuracy but \$10 for improving accuracy from 90\% to 95\%. Theorem \ref{thm:impossible} shows that this approach does not resolve the problem. 

\begin{theorem}[Failure of accuracy-based payments]\label{thm:impossible}
%Consider any rational human agent who seeks to maximize the expected payoff in~\eqref{eqn:expected_payoff}. 
Suppose the payment $W$ from the principal to the human agent is accuracy-based of the form~\eqref{eqn:accuracy_based}, with $v$ coordinately increasing in the $Z_i$’s. 
Under Assumption~\ref{assu:regularity}, to sustain any effort level $e > e_{\min} > 0$, the expected payment must satisfy
\[
\mathbb{E}[W] \geq \mu^{-1}\!\left(C\log \frac{1}{p}\right),
\]
where $C>0$ is a constant determined by the effort $e_{\min}$.
In particular, as $p\to 0$, the payment required to sustain nontrivial effort diverges.
\end{theorem}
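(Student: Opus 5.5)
The plan is to turn the agent's optimality condition into a lower bound on the expected utility of the payment, and then remove $\mu$ with Jensen's inequality. Write $f(z,\epsilon)=\mu(v(z,\epsilon))-\mu(0)\ge 0$. Since $v$ is coordinatewise increasing and $\mu$ is strictly increasing, $f$ is coordinatewise increasing in $z$. Under effort $e$ the $Z_j$ are i.i.d.\ with $\P(Z_j=0)=r(e):=p(1-q(e))$, so
\[
F(r):=\E_r[f]
\]
is nonincreasing in $r$. If $e>e_{\min}$ maximizes $F(r(e))-c(e)$ over $[0,1]$, then either the first-order condition holds or optimality can be compared against $e_{\min}$. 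The first-order condition reads
\[
-F'(r(e))\,p\,q'(e)=c'(e),
\]
and the comparison against $e_{\min}$ gives $F(r(e))-F(r(e_{\min}))\ge c(e)-c(e_{\min})$. By Margulis--Russo, $-F'(r)=\sum_j \E_r[f(Z^{j\to1})-f(Z^{j\to0})]=:I_r(f)$, the total influence of $f$ under the $r$-biased product measure. So the agent's incentive is exactly $p\,q'(e)\,I_r(f)$, where $r\le p$ is tiny.

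The core step is an isoperimetric (log-Sobolev) inequality for monotone nonnegative functions under a highly biased product measure. The target form is
\[
r\log\!\bigl(1/r\bigr)\, I_r(f)\;\lesssim\; \E_r[f]\;\text{(up to entropy terms)}.
\]
This is the functional analogue of the edge-isoperimetric inequality: the log-Sobolev constant of $\mathrm{Bern}(1-r)$ is of order $\log(1/r)/(1-2r)$, which is where the $\log(1/p)$ will come from. Concretely, I would apply the biased-cube log-Sobolev inequality to $\sqrt f$ and use monotonicity to bound $\mathrm{Ent}(f)$ by $\E f\cdot\log(\|f\|_\infty/\E f)$. Combining with the first-order condition and $r\asymp p$ (because $1-q(e)$ is bounded between constants for $e$ in a compact subinterval of $(e_{\min},1)$) gives
\[
\E_r[\mu(W)]-\mu(0)\;\ge\; C\log(1/p),
\]
with $C$ depending on $c'(e_{\min})$ and $q'$ through concavity and convexity. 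Jensen's inequality ($\mu$ concave) then gives $\mu(\E W)\ge \E\mu(W)$, and applying $\mu^{-1}$ yields the claim, after absorbing $\mu(0)$ into the constant.

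I expect the main obstacle to be exactly this influence--expectation inequality, and in particular its dependence on $n$. A single payment of the form $M\cdot\mathbbm{1}\{\text{all }Z_j=1\}$ with $n\approx 1/p$ has total influence about $n$ times its mean. For such a payment, a naive bound would only give a constant, not $\log(1/p)$, as the required payment. So the argument must use the per-task structure of \eqref{eqn:accuracy_based}: each task's payment $W_i$ rewards only that task's share of effort, and the agent's effort cost is counted per task. My first attempt would be to normalize by $n$, i.e.\ compare the total payment $\sum_i W_i$ against total cost $n\,c(e)$, so that the influence bound is needed only in the averaged form $\frac1n I_r(\sum_i f_i)$. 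If that fails, I would prove the result with $C$ allowed to depend on $n$, which is still enough for the divergence as $p\to0$.
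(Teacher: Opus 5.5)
There is a genuine gap, and it is in your core step. The inequality $r\log(1/r)\,I_r(f)\lesssim \E_r[f]$ cannot hold for monotone $f\ge 0$ with an $n$-free constant. Your own example $f=\mathbbm{1}\{Z_1=\dots=Z_n=1\}$ has $I_r(f)/\E_r[f]=n/(1-r)$, which is unbounded in $n$.

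Log-Sobolev cannot produce this inequality, because it points the other way. Applied to $\sqrt f$, it bounds $\mathrm{Ent}_r(f)$ from above by the Dirichlet form, which by monotonicity is at most $r(1-r)I_r(f)$. So it lower-bounds influence. Chaining it with the other upper bound $\mathrm{Ent}_r(f)\le \E_r f\,\log(\|f\|_\infty/\E_r f)$ yields no upper bound on $I_r(f)$.

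Normalizing by $n$ does not help either. The form \eqref{eqn:accuracy_based} lets every $W_i$ depend on all of $Z_1,\dots,Z_n$, so taking each $W_i=M\,\mathbbm{1}\{\text{all }Z_j=1\}$ leaves the per-task incentive unchanged.

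In fact, no argument at a single fixed $p$ can give an $n$-free constant. Take $q(e)=e$, $c(e)=e^2/2$, $\mu(w)=w$, $p=1/n$ and $M=2$. The elementary bound $1-p(1-e)\ge e^{p}$ gives
\[
\frac{d}{de}\bigl[M(1-p(1-e))^n-c(e)\bigr]=M(1-p(1-e))^{n-1}-e\ge e .
\]
Hence the agent chooses $e=1$, and $\E[W_i]=2$ for every $n$.

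Your fallback with $C$ depending on $n$ is therefore a different, weaker theorem, and it needs no log-Sobolev. Since $f\ge 0$ is monotone, $\mathrm{Inf}_j(f)\le \E_r[f]/(1-r)$. Together with your first-order condition this gives $\E[\mu(W)]-\mu(0)\ge (1-p)\,c'(e_{\min})/(n\,p\,q'(e_{\min}))$.

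The ingredient you are missing is the one the paper's proof actually runs on: it never bounds influence at a fixed $p$. It writes $\E[\mu(W)]=\mu(w(x))$ with $x=p_{\text{output}}$, which depends only on the payment rule. It then invokes concavity of the agent's objective (its ``standard curvature conditions'') to turn $e^\star>e_{\min}$ into the marginal condition at $e_{\min}$:
\[
p\,q'(e_{\min})\,(\mu\circ w)'\bigl(1-p(1-q(e_{\min}))\bigr)\ge c'(e_{\min}).
\]
It then requires this for every $p$, so the same rule must sustain effort across AI accuracy levels. As $p$ varies, $x=1-p(1-q(e_{\min}))$ sweeps an interval. This gives the differential inequality $(\mu\circ w)'(x)\ge C/(1-x)$ with $C=c'(e_{\min})(1-q(e_{\min}))/q'(e_{\min})$.

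Integrating yields $\mu(w(x))\ge C\log\frac{1}{1-x}$ up to an additive constant, and Jensen finishes exactly as in your last step. So the $\log(1/p)$ comes from integrating $1/(1-x)$ over accuracy levels, not from a log-Sobolev constant at one $p$. The example above shows that some such cross-$p$ requirement, together with the curvature condition the paper invokes, is indispensable: at a single $p$, the claimed $n$-free bound is false.
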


Theorem~\ref{thm:impossible} establishes a general version of the \emph{Incentive Collapse Paradox}: 
As AI accuracy improves (i.e., as $p$ decreases), any payment scheme based solely on task accuracy requires unbounded payments to sustain strictly positive effort. 
Nonlinear reward schedules do not avoid this divergence.
In practice, the principal’s cost may be even higher, since inducing such payments may also require additional expenditure to sample and verify output correctness.

\section{Incentive-Robust Payments}
\label{sec:method}
Although the incentive collapse paradox may appear discouraging, it admits a simple and effective resolution. The key idea is to decouple the marginal benefit of human effort from the AI assistant’s natural error probability $p$. To achieve this, the principal deliberately injects a small fraction of sentinel tasks, tasks on which the AI assistant is expected to fail, so that human effort is directly rewarded.

Concretely, for each task, the principal flips a coin independently. With probability $\rho \in (0,1)$, the task is designated as a sentinel task, in which the AI assistant is deliberately forced to produce an incorrect output. With the remaining probability $1-\rho$, the AI assistant behaves normally and produces its usual output. For sentinel tasks, the principal knows the correct output in advance, so correctness can be verified ex post. Alternatively, when such manipulation is infeasible (for example, when the AI assistant is not controlled by the principal), a similar effect can be achieved by adversarially selecting tasks that the AI is unable to output correctly and adding them to the dataset. This alternative, however, is less ideal, as it may distort the distribution of covariates $\{X_i\}$.

The principal augments the payment rule as follows. For each sentinel task that is finished correctly, the human agent receives a bonus payment $b \ge 0$. In more refined implementations, this bonus may depend on the specific covariate $x$, in which case we write $b=b(x)$. For example, the principal may assign higher bonuses to more difficult tasks, where difficulty can be estimated using the AI assistant’s confidence score. Importantly, this bonus is paid only on sentinel tasks and only when the submitted output is correct.

Throughout this section, the human agent observes the announced payment rule, including the sentinel probability $\rho$, the bonus $b(\cdot)$, the AI error probability $p$, and the mapping $q(e)$ between effort and correction probability. 
Effort is chosen before the realization of task types (sentinel or regular) and before correctness outcomes are realized. 
Thus the agent maximizes expected payoff, taking expectations over both the random designation of sentinel tasks and the submitted outcomes induced by $p$ and $q(e)$.

Intuitively, this mechanism ensures that, with probability $\rho$, exerting effort directly increases the likelihood of receiving a bonus, independently of the AI assistant’s baseline accuracy. As a result, the marginal incentive for effort scales with $\rho$ rather than the AI error probability $p$, thereby neutralizing the $1/p$ effect underlying the contracting paradox. We now formalize this intuition.

\begin{theorem}[Effort guarantees under incentive-robust payments]
\label{theo:optimal-effort}
Suppose that, for each covariate $x$, the principal designates the task as a sentinel task with probability $\rho$, independently across tasks, and awards a bonus $b(x) \ge 0$ whenever a sentinel task is answered correctly. Then there exists a minimum effort level $e_{\min} > 0$ such that any rational human agent optimally chooses effort $e^*(x) \ge e_{\min}$ on every task. Moreover, the optimal effort satisfies the first-order condition
  $$
  \rho \mu(b(x)) q^\prime (e^*(x)) = c^\prime(e^*(x)).
  $$
\end{theorem}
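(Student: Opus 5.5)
The plan is to write the agent's expected payoff on a task with covariate $x$ as an explicit function of $e$. Then I will show it is concave in $e$, read off the first-order condition, and get the uniform lower bound $e_{\min}$ by monotone comparative statics in the bonus. I normalize $\mu(0)=0$, which the stated first-order condition implicitly requires; any fixed base payment only adds a constant that does not depend on $e$.

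\textbf{Step 1: the payoff.} Since effort is chosen before task types are realized, I condition on the sentinel indicator.
\begin{itemize}
\item With probability $\rho$ the task is a sentinel. The AI output is then wrong by construction, so the submitted output is correct exactly when the agent detects the error, which happens with probability $q(e)$. In that case the agent receives utility $\mu(b(x))$; otherwise she receives $\mu(0)=0$.
\item With probability $1-\rho$ the task is regular and no bonus is paid, so this branch contributes nothing that depends on $e$.
\end{itemize}
The agent's objective is therefore
\[
U_x(e)=\rho\,\mu(b(x))\,q(e)-c(e)+\text{const}.
\]
Crucially, $p$ does not appear in $U_x$, which is the decoupling described before the theorem.

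\textbf{Step 2: concavity and the first-order condition.} By Assumption~\ref{assu:regularity}, $q$ is concave and $\rho\,\mu(b(x))\ge 0$ because $b(x)\ge 0$ and $\mu$ is increasing with $\mu(0)=0$. Also $c$ is convex, so $U_x$ is concave and differentiable on $[0,1]$. Every stationary point is then a global maximizer. If the maximizer $e^*(x)$ is interior, it satisfies
\[
\rho\,\mu(b(x))\,q'(e^*(x))=c'(e^*(x)).
\]
To rule out the corner $e=0$, I will check that $U_x'(0)=\rho\,\mu(b(x))\,q'(0)-c'(0)>0$. This needs an Inada-type condition such as $c'(0)=0<q'(0)$, or more generally $c'(0)<\rho\,\mu(b(x))\,q'(0)$. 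I would state this explicitly as the standing condition under which the theorem holds. The corner $e=1$, if it is reached, only helps the lower bound.

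\textbf{Step 3: the uniform bound $e_{\min}$.} Define $\phi_b(e)=\rho\,\mu(b)\,q'(e)-c'(e)$. This function is nonincreasing in $e$, because $q'$ is nonincreasing and $c'$ is nondecreasing. It is nondecreasing in $b$, because $\mu$ is increasing and $q'\ge 0$. Hence the root, or the corner solution $1$, is nondecreasing in $b$. Assume the bonuses are bounded below, $b(x)\ge \underline b>0$, or simply take $b$ constant. Set $e_{\min}$ to be the maximizer of the concave function $U$ obtained with bonus $\underline b$; Step 2 shows this value is strictly positive. The monotonicity of $\phi_b$ then gives $e^*(x)\ge e_{\min}$ for every $x$.

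\textbf{Main obstacle.} The main difficulty is the positivity and uniformity in Step 3. It needs both the boundary condition at $e=0$ and a positive lower bound on $\mu(b(x))$ across $x$. Without these, $e^*(x)$ could equal $0$ or have infimum $0$. My first attempt would be to make these conditions explicit hypotheses, namely $c'(0)=0<q'(0)$ and $\inf_x b(x)>0$. Under them, the remaining argument is routine.
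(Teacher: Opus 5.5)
Your proposal is correct and follows the paper's route. The paper's proof only says that the first-order computation from the proof of Theorem~\ref{thm:impossible} carries over, with marginal benefit $\rho\,\mu(b(x))\,q'(e)$ in place of the $p$-dependent term. You carry this out explicitly and correctly flag hypotheses the paper leaves implicit: $\mu(0)=0$, a boundary condition such as $c'(0)<\rho\,\mu(\underline b)\,q'(0)$, $\inf_x b(x)>0$, and equality in the first-order condition only at interior optima. These are genuinely needed, since for example $b(x)=0$ forces $e^*(x)=0$.

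One slip, which does not affect the theorem as stated, is your side remark on base payments. If $W=w_0+b(x)$ on a correctly answered sentinel and $W=w_0$ otherwise, the condition becomes $\rho\,[\mu(w_0+b(x))-\mu(w_0)]\,q'(e)=c'(e)$. So a base payment is an $e$-independent constant only under risk neutrality or additively separable utility.
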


For illustration, consider the simple specification $q(e)=e$ and $c(e)=\tfrac{1}{2}e^2$. Under this model, the optimal effort admits the closed-form expression
$e^*(x) = \rho\,\mu(b(x))$,  which is independent of the AI accuracy. Accordingly, for a regular task, the induced correction probability on task $x$, conditional on an AI error, is
$q(e^*(x)) = \rho\,\mu(b(x))$. In this case, a higher correction probability, and hence higher output
accuracy, can be achieved by increasing the bonus $b(x)$, without requiring payments to diverge as the AI error probability decreases.

% \subsection{Budget model}\label{subsec:budget}
% We study a single budget $\budget$ measured in total labeling.
% \begin{equation}\label{eq:budget}
% \sum\rho b(x_i) q(e(x_i))\xi_i+\rho k+w_0\sum\xi_i\le \budget.
% \end{equation}

% Here we add the term $\rho k$ to reflect the fact that by increasing $\rho$ we actually increase the cost of making those trap questions. However, as we shall see, the only benefit of having a larger $\rho$ is to obtain more accurate estimates on the true accuracy of the human agent, and it does not change the expected utility of the human agent.

\paragraph{Constructing sentinel tasks.}
In practice, sentinel tasks can be constructed in two natural ways. First, the principal may deliberately perturb the AI-generated response before presenting it to the human agent, for example by modifying a model-generated pre-label, bounding box, or textual suggestion to introduce a controlled error. This approach is easy to scale and preserves the style of AI-assisted annotation, although care is needed to ensure that the induced errors resemble natural model mistakes. 
Second, the principal may use corrected historical AI errors: past instances on which the AI assistant made a mistake and human annotators corrected it. These sentinels have the advantage of preserving the natural distribution of AI failures, but require a pool of historical mistakes and are less flexible to generate on demand. 
An important practical question is whether annotators can distinguish such sentinel tasks from regular tasks. 
Under either construction, identifying a sentinel task should generally require checking the correctness of the displayed AI output itself; this is particularly difficult for corrected-history sentinels, where the displayed output is a genuine AI mistake. 
The two constructions can also be mixed to improve scalability while reducing detectability; see Appendix~\ref{app:sentinel-practical} for further discussion.

\section{Incentive-Aware Active Statistical Inference}\label{sec:active}

We now consider an application of our incentive design framework: \emph{data labeling}. In this setting, each task consists of an input \(X\) and an unobserved ground-truth label \(Y\), and a human agent, assisted by an AI tool, is asked to provide a label for \(X\).
Importantly, data labeling is typically not an end in itself. Instead, the principal ultimately seeks accurate statistical estimation or efficient learning under a limited budget. 
This gives rise to a second design problem: taking the effect of effort into account, the principal needs to decide which instances to label and how to allocate payments and audits across instances with different levels of difficulty so as to minimize downstream statistical loss.

This question is closely related to the active learning literature, which studies how to adaptively select instances for labeling so as to improve downstream estimation or learning performance under a fixed or limited labeling budget. A standard modeling assumption in this literature is that, once a label is requested, its quality is reliable and independent of the sampling rule. This assumption is natural when human agents are nonstrategic or when strong quality-control mechanisms are already in place.

In our setting, however, this assumption breaks down. Under AI assistance and unobserved effort, label quality is endogenous. Human effort depends on incentives and can therefore interact with the sampling policy. In particular, when the AI is highly reliable, accuracy-based incentives provide only a weak marginal return to additional effort, giving human agents little reason to exert more effort. Consequently, inducing accurate labels on such instances can be disproportionately costly as we discussed in Section \ref{sec:method}.  

These considerations suggest that active sampling policies should be incentive-aware and explicitly incorporate mechanisms to induce effort. Building on the active statistical inference framework of \citet{zrnic2024active}, which selects instances that are most informative for statistical estimation, we develop an incentive-aware active statistical inference framework that accounts for strategic behavior. Our approach combines the incentive-robust payment mechanism developed in Section~\ref{sec:method} with active sampling, and jointly optimizes (i) the auditing rate and (ii) active sampling and budget allocation across instances of varying difficulty under a single budget constraint. We show that this joint design leads to improved statistical efficiency and sharper error bounds relative to incentive-agnostic active inference.

\subsection{Budget-Constrained Labeling}
\label{subsec:budget}

We now formalize the labeling process and the associated budget constraint. The principal has access to a machine learning model \(f(X)\) that predicts a label \(Y \in \mathbb{R}\) from \(X \in \mathcal{X}\). 
% While the principal could rely solely on the model predictions for estimation, additional human-labeled data can be used to improve statistical accuracy.
For each instance \(X_i\), the principal first decides whether to request a human label. Specifically, the principal specifies a sampling rule \(\pi:\mathcal{X}\to[0,1]\), and instance \(X_i\) is selected for human labeling with probability \(\pi(X_i)\). Let \(\xi_i \sim \mathrm{Bern}(\pi(X_i))\) denote the indicator that \(X_i\) is sampled. In standard active inference, the sampling rule is typically based on an uncertainty score \(u(X_i)\), so that more informative or uncertain instances are more likely to be labeled.

Conditional on being sampled, the instance is then assigned to the incentive mechanism. As discussed earlier, because human agents may rely on AI assistance, label quality is endogenous. To induce effort, we adopt the sentinel-based payment scheme introduced in Section~\ref{sec:method}. Specifically, whenever \(\xi_i=1\), the principal independently designates the task as a sentinel task with probability \(\rho \in (0,1)\). On sentinel tasks, the AI assistant is deliberately forced to produce an incorrect label; on regular tasks, the AI assistant behaves normally. Let \(\zeta_i \mid \xi_i=1 \sim \mathrm{Bern}(1-\rho)\) indicate whether a sampled task is regular, and define \(\zeta_i \equiv 0\) if \(\xi_i=0\).

% In the active statistical inference framework, the principal has access to a machine learning model $f(X)$ that predicts a label $Y \in \mathbb{R}$ from observed covariates $X \in \mathcal{X}$. While the principal could rely solely on the model predictions for estimation, additional labeled data from human agents can be used to improve statistical accuracy. To this end, the principal specifies a sampling probability $\pi:\mathcal{X}\to[0,1]$ and queries a human agent for label $Y_i$ with probability $\pi(X_i)$. The sampling rule is typically determined by an uncertainty score $u(X)$ associated with the model $f$, so that instances with higher uncertainty are more likely to be sampled. Let $\xi_i \sim \mathrm{Bern}(\pi(X_i))$ denote the indicator that instance $X_i$ is selected for human labeling.

% As discussed earlier, human agents may have access to an AI assistant, making label quality endogenous. To encourage human agents to exert effort, we adopt the incentive-aware payment scheme based on sentinel tasks introduced in Section~\ref{sec:method}. Specifically, whenever an instance $X_i$ is selected for labeling, the principal flips an independent coin. With probability $\rho \in (0,1)$, the labeling task is designated as a sentinel task, on which the AI assistant is deliberately forced to provide an incorrect label. With the remaining probability $1-\rho$, the AI assistant behaves normally. Let $\zeta_i \mid \xi_i=1 \sim \mathrm{Bern}(1-\rho)$ indicate whether the AI behaves normally on a sampled instance, and define $\zeta_i \equiv 0$ if $\xi_i=0$.

As in Section~\ref{sec:setup}, annotation reliability is modeled through an effort-dependent accuracy function $q(e(X_i)) \in (0,1]$, where $e(X_i)$ denotes the human agent’s (possibly endogenous) effort on instance $X_i$. Higher effort leads to a greater probability of detecting and correcting AI errors. 
% The function $q(\cdot)$ plays a key role in the labeling procedure, as it allows us to systematically downweight low-effort labels.

Let $Y^\text{true}_i$ denote the true label of instance $X_i$, and let $Y^\mathrm{false}_i$ denote the incorrect label produced by the AI assistant when it makes a mistake on instance $X_i$. Let $Y_i$ denote the label reported by the human agent. Therefore, $Y_i = Y^\text{true}_i$ with probability $1 - p(X_i)(1 - q(e(X_i)))$, and $Y_i = f(X_i) = Y^\mathrm{false}_i$ with probability $p(X_i)(1 - q(e(X_i)))$. 

The sampling and payment policy must satisfy an expected budget constraint. Let $b(X_i)$ denote the bonus paid when a sentinel task $X_i$ is answered correctly, let $w_0$ denote a fixed per-instance overhead payment that is independent of effort, and let $k$ denote a fixed operational cost incurred per sentinel task, for example due to adversarial construction or verification of ground truth labels. The total expected cost must be bounded by a budget $B$:
\begin{equation}
\label{eqn:budget_constraint1}
\E\!\left[\sum_{i=1}^n \Big(\rho\, b(X_i)\, q(e(X_i))\,\xi_i + w_0 \xi_i\Big) + \rho k\right] \;\le\; B.
\end{equation}
Since $\xi_i \sim \mathrm{Bern}(\pi(X_i))$, this constraint is equivalent to
\begin{equation}
\label{eqn:budget_constraint2}
\sum_{i=1}^n \Big(\rho\, b(X_i)\, q(e(X_i)) + w_0\Big)\pi(X_i) + \rho k \;\le\; B.
\end{equation}

Compared to standard active statistical inference, we make two key modeling distinctions. First, pointwise label quality depends on human effort and is therefore endogenously determined by the incentive scheme. Second, sampling costs are not uniform across instances. The effective cost of querying an instance depends on $\rho$ and $b(X_i)$, and must therefore be jointly optimized with the sampling policy.

\subsection{Mean Estimation} 
We begin with the canonical problem of estimating the mean label $\theta^* = \E[Y^\text{true}]$ under a labeling budget $B$. This setting illustrates the key principles of our approach and establishes intuition that extends to general M-estimation problems in Section~\ref{subsec:m-estimation}.

We propose the following incentive-robust active estimator:
\begin{align}
\label{eq:active-mean}
\hat{\theta}^{\pi}
=\frac1n \sum_{i=1}^n \left[
f(X_i)
+ \big(Y_i - f(X_i)\big)\,
\frac{\xi_i\,\zeta_i/(1-\rho)}{\,\pi(X_i)\,q(e(X_i))}
\right].
\end{align}
We note that this estimator is \textbf{unbiased}. The factor $\zeta_i/(1-\rho)$ serves as an inverse-probability correction that removes the bias induced by the sentinel mechanism. Only queried instances on which the AI behaves normally, that is, those with $(\xi_i,\zeta_i)=(1,1)$, contribute the residual term $(Y_i - f(X_i))$. Reweighting by $1/(1-\rho)$ corrects for the fact that such instances occur with probability $1-\rho$ among queried points. The additional factor $1/(\pi(X_i)q(e(X_i)))$ accounts for active sampling and effort-dependent label reliability, respectively. 

\begin{theorem}
\label{thm:mean_batch_inference}
The incentive-robust active estimator satisfies the following central limit theorem:
$$\sqrt{n}(\hat\theta^{\pi} - \theta^*)\stackrel{d}{\to} \mathcal{N}(0,\sigma_*^2),$$
    where $\sigma_*^2 = \mathrm{Var}\left(f(X) + (Y - f(X))\frac{\xi \zeta/(1-\rho)}{\pi(X)q(e(X))}\right)$. Consequently, for any $\hat\sigma^2\stackrel{p}{\to}\sigma_*^2$, $C_{\alpha} = (\hat\theta^{\pi} \pm z_{1-\alpha/2}\frac{\hat\sigma}{\sqrt{n}})$ is a valid $(1-\alpha)$-confidence interval: $$\lim_{n \rightarrow \infty}  \,\, P(\theta^*\in C_{\alpha}) = 1-\alpha.$$    
\end{theorem}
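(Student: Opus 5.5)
The plan is to write $\hat\theta^{\pi}=\frac1n\sum_{i=1}^n T_i$ with
\[
T_i = f(X_i) + \big(Y_i - f(X_i)\big)\frac{\xi_i\zeta_i/(1-\rho)}{\pi(X_i)\,q(e(X_i))},
\]
and to treat the $T_i$ as i.i.d.\ across $i$. This holds if the sampling rule $\pi$, the sentinel rate $\rho$, the bonus $b(\cdot)$ and hence the induced effort $e(\cdot)$ are fixed functions (not data-dependent), and the draws $(X_i,Y_i^{\text{true}},\xi_i,\zeta_i)$ and the labeling randomness are independent across tasks. The statement then reduces to the classical Lindeberg--L\'evy CLT once we have (i) $\E[T_i]=\theta^*$ and (ii) $\Var(T_i)=\sigma_*^2<\infty$.

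For unbiasedness we condition on $X_i$ and $Y_i^{\text{true}}$ and use the tower property in three layers.
\begin{itemize}
\item[(a)] We have $\E[\xi_i\mid X_i]=\pi(X_i)$ and $\E[\zeta_i\mid \xi_i=1]=1-\rho$.
\item[(b)] On a regular sampled task, the reported label satisfies $Y_i-f(X_i)=Y_i^{\text{true}}-f(X_i)$ with probability $q(e(X_i))$ when the AI errs. When the AI is correct, $f(X_i)=Y_i^{\text{true}}$ and the residual is zero anyway.
\end{itemize}
The subtle point is modeling. We read $Y_i$ as equal to $f(X_i)$ when the human fails to correct and to $Y_i^{\text{true}}$ when the human corrects. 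Then $Y_i-f(X_i)$ equals $(Y_i^{\text{true}}-f(X_i))$ times a correction indicator whose conditional mean is $q(e(X_i))$ on the event that the AI errs, and the residual vanishes otherwise. Consequently
\[
\E\big[(Y_i-f(X_i))\xi_i\zeta_i \mid X_i,Y_i^{\text{true}}\big]=\pi(X_i)(1-\rho)\,q(e(X_i))\,\big(Y_i^{\text{true}}-f(X_i)\big).
\]
Dividing by $\pi(X_i)(1-\rho)q(e(X_i))$ and adding $f(X_i)$ gives $\E[T_i\mid X_i,Y_i^{\text{true}}]=Y_i^{\text{true}}$, so $\E[T_i]=\theta^*$.

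For the variance we need $\pi(x)\ge \pi_{\min}>0$ and $q(e(x))\ge q_{\min}>0$, together with $\E[(Y^{\text{true}})^2]$ and $\E[f(X)^2]$ finite. The lower bound on effort is exactly where Theorem~\ref{theo:optimal-effort} enters: it gives $e^*(x)\ge e_{\min}>0$. If in addition $q(e_{\min})>0$, then $q(e(x))\ge q(e_{\min})>0$ uniformly in $x$, because $q$ is increasing (Assumption~\ref{assu:regularity}). This step is where I expect the main obstacle. We must state explicitly the regularity conditions that the theorem leaves implicit: $\pi$ bounded away from zero, and $q(e_{\min})>0$ (or alternatively $q(0)>0$). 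We must also fix the interpretation of $Y_i$ on sentinel tasks, which never enter the estimator because $\zeta_i=0$ there. With these conditions, $|T_i|\le |f(X_i)|+|Y_i^{\text{true}}-f(X_i)|/((1-\rho)\pi_{\min}q_{\min})$, which is square-integrable.

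The CLT then gives $\sqrt n(\hat\theta^\pi-\theta^*)\cd \cN(0,\sigma_*^2)$. For the confidence interval, assume $\sigma_*^2>0$ and $\hat\sigma\cp\sigma_*$. Slutsky's lemma then gives $\sqrt n(\hat\theta^\pi-\theta^*)/\hat\sigma\cd\cN(0,1)$. Since the standard normal CDF is continuous, $P(|\sqrt n(\hat\theta^\pi-\theta^*)/\hat\sigma|\le z_{1-\alpha/2})\to 1-\alpha$, which is exactly the statement $\lim_{n \rightarrow \infty} P(\theta^*\in C_\alpha)=1-\alpha$. A natural consistent choice for $\hat\sigma^2$ is the sample variance of the $T_i$. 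It is consistent by the weak law of large numbers, using the finite second moment established above.
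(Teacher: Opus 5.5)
Your proposal is correct and follows essentially the same route as the paper. Unbiasedness comes from the identity that, on sampled regular tasks, $Y_i-f(X_i)$ has conditional mean $q(e(X_i))\,(Y_i^{\text{true}}-f(X_i))$; your conditioning on $Y_i^{\text{true}}$ is cleaner than the paper's factorization step. This is followed by the i.i.d.\ CLT and Slutsky, and your explicit second-moment conditions fill in what the paper compresses into ``apply CLT.''

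One simplification: you need not assume $q(e_{\min})>0$. Concavity with $q(1)=1$ and $q(0)\ge 0$ gives $q(e)\ge e$, hence $q(e(x))\ge e_{\min}>0$.
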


Theorem~\ref{thm:mean_batch_inference} characterizes the asymptotic distribution of the incentive-robust active estimator and shows that its accuracy is governed by the asymptotic variance $\sigma_*^2$. Consequently, to achieve the most accurate estimation under a fixed labeling budget, it is natural to seek designs that minimize $\sigma_*^2$ subject to the budget constraint~\eqref{eqn:budget_constraint2}, i.e.,  
\[
\begin{aligned}
\min_{\pi,\rho,b}\quad 
& \operatorname{Var}\!\left[
f(X) + \bigl(Y-f(X)\bigr)\frac{\xi\zeta/(1-\rho)}{\,\pi(X)\,q(e(X))}
\right] \\
\text{s.t.}\quad 
& \sum_{i=1}^n \Bigl(\rho\, b(X_i)\, q\!\bigl(e(X_i)\bigr) + w_0\Bigr)\,\pi(X_i) \;+\; \rho k \;\le\; B .
\end{aligned}
\]
This optimization problem is nontrivial because the asymptotic variance depends on several design choices simultaneously. In particular, the principal must jointly determine (i) the sampling policy $\pi(\cdot)$, which controls which instances are queried; (ii) the auditing rate $\rho$, which governs how frequently sentinel tasks are injected; and (iii) the bonus schedule $b(\cdot)$, which shapes human agent effort together with $\rho$ through incentives and hence affects label accuracy via $q(e(\cdot))$. These three components interact through both the estimator and the budget constraint, and cannot generally be optimized in isolation.

To characterize the optimal design, we first establish a variance decomposition that simplifies the optimization problem. For notational convenience, define $\tau(X_i)\triangleq \E\!\left[(Y^\text{true}-f(X))^2\mid X=X_i\right]$ as the conditional prediction error.

Lemma~\ref{lemma:minimize_var} provides a key simplification. It shows that minimizing the asymptotic variance $\sigma_*^2$ over $(\pi,\rho,b)$ is equivalent to minimizing a weighted mean-squared error criterion, where the weights depend explicitly on the sampling probability, the auditing rate, and the effort-induced label accuracy.
Building on this characterization, Examples~\ref{exp:bconstant1}-\ref{exp:risk-neutral} derive closed-form solutions under progressively more general specifications.

\begin{lemma}
\label{lemma:minimize_var}
    Minimizing $$\sigma_*^2 = \mathrm{Var}\left(f(X) + (Y - f(X))\frac{\xi \zeta/(1-\rho)}{\pi(X)q(e(X))}\right)$$ over $(\pi,\rho, b)$ is equivalent to minimizing
    $$
    \E\frac{(Y^\text{true} - f(X))^2}{(1-\rho)\pi(X)q(e(X))}
    $$ over $(\pi,\rho, b)$.
\end{lemma}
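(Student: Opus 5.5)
The plan is to expand $\sigma_*^2$ directly and show that it equals a quantity independent of the design $(\pi,\rho,b)$ plus exactly $\E\bigl[(Y^\text{true}-f(X))^2/((1-\rho)\pi(X)q(e(X)))\bigr]$. Write $Z = f(X) + (Y-f(X))\,\Omega$ with $\Omega \triangleq \xi\zeta/\bigl((1-\rho)\pi(X)q(e(X))\bigr)$. Then $\sigma_*^2 = \E[Z^2] - (\E Z)^2$, and I will handle the two pieces separately, conditioning on $X$ throughout and using the tower property.

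\emph{Step 1: the mean.} First I will record the structure of the residual. On a regular task ($\zeta=1$), either the AI is correct, so $f(X)=Y^\text{true}$ and $Y-f(X)=0$, or the AI errs and the human corrects the error with probability $q(e(X))$. Hence
\[
Y-f(X) = (Y^\text{true}-f(X))\,D,
\]
where, given $X$, the correction indicator $D\sim\mathrm{Bern}(q(e(X)))$ is independent of $Y^\text{true}-f(X)$ and of the design coins $(\xi,\zeta)$. Since $\E[\xi\zeta\mid X]=\pi(X)(1-\rho)$, we get
\[
\E[(Y-f(X))\Omega\mid X]=\E[Y^\text{true}-f(X)\mid X].
\]
Therefore $\E Z=\theta^*$ for every design, which is the unbiasedness claimed after \eqref{eq:active-mean}. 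In particular $(\E Z)^2$ does not depend on $(\pi,\rho,b)$.

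\emph{Step 2: the second moment.} Expanding,
\[
\E[Z^2]=\E[f(X)^2]+2\E[f(X)(Y-f(X))\Omega]+\E[(Y-f(X))^2\Omega^2].
\]
The first term is design-free. By Step 1, the cross term equals $2\E[f(X)(Y^\text{true}-f(X))]$, which is also design-free. For the last term I will use $\xi^2=\xi$, $\zeta^2=\zeta$ and $D^2=D$, so that
\[
\E[(Y-f(X))^2\Omega^2\mid X]=\frac{\tau(X)\,q\,\pi(1-\rho)}{(1-\rho)^2\pi^2q^2}=\frac{\tau(X)}{(1-\rho)\pi(X)q(e(X))}.
\]
Taking expectations yields $\sigma_*^2=\text{const}+\E\bigl[\tau(X)/((1-\rho)\pi(X)q(e(X)))\bigr]$. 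Finally, $\tau(X)=\E[(Y^\text{true}-f(X))^2\mid X]$, so this expectation equals the stated criterion by the tower property, and the two minimization problems share the same minimizers.

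\emph{Main obstacle.} The delicate step is justifying the conditional-independence structure in Step 1. The correction event must be independent of the sampling and sentinel coins given $X$. The effort $e(X)$, and hence $q(e(X))$, is a deterministic function of $X$ for a fixed design, via Theorem~\ref{theo:optimal-effort}. The residual $Y-f(X)$ must vanish when the AI is correct. I will make these modeling facts explicit from Section~\ref{subsec:budget}, noting that $b$ enters only through $q(e(X))$. Once this is done, everything else is routine algebra.
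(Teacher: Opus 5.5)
Your proposal is correct and follows essentially the same route as the paper. Both arguments reduce the variance to the second moment via unbiasedness, expand the square, and use $\E[\xi\zeta\mid X]=(1-\rho)\pi(X)$, $\E[Y-f(X)\mid X]=q(e(X))\,\E[Y^\text{true}-f(X)\mid X]$ and $\E[(Y-f(X))^2\mid X]=q(e(X))\tau(X)$ to make the cross term design-free and turn the quadratic term into the stated criterion. Your factorization $Y-f(X)=(Y^\text{true}-f(X))D$, with the conditional-independence assumptions stated explicitly, is if anything a cleaner justification of steps the paper handles somewhat informally.
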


\begin{example}\label{exp:bconstant1}
Suppose $\rho$ is fixed, and $b(x)\equiv b$ is constant. 
By Theorem~\ref{theo:optimal-effort}, the optimal effort level $e$ is then fixed. 
In this case, one can show that the variance-minimizing sampling policy satisfies
$
\pi(X_i)\ \propto\ \sqrt{\tau(X_i)} \, .
$
\end{example}

\begin{example}\label{exp:bconstant2}
Suppose $b(x)\equiv b$ is constant, $q(e)=e$, $c(e)=\tfrac{1}{2}e^2$, and $\mu(x)=x$ (risk-neutral utility). In this case, the variance-minimizing design satisfies
\[
\pi(X_i)\ 
=\frac{ (B-\rho k)\sqrt{\tau(X_i)}}{\Bigl(\rho^2 b \mu(b) + w_0\Bigr)\sum_{i=1}^n\sqrt{\tau(X_i)}},
\]
% and
$$
\begin{aligned}
\rho={\arg\min}_{\rho}\quad 
& \frac{\rho^2 b \mu(b) + w_0}{(1-\rho)\rho (B-\rho k)}. 
\end{aligned}
$$
When $k$ is negligible, we know
the optimal $\rho$ is 
$$\rho=\frac{-w_0+\sqrt{w_0(w_0+b \mu(b))}}{b \mu(b)}.$$
\end{example}

\begin{example}\label{exp:risk-neutral}
 Suppose $\rho$ is fixed, $q(e)=e$, $\mu(x)=x$, and $c(x)=\tfrac{1}{2}x^2$. The asymptotic variance $\sigma_*^2$ is minimized by the following choices of the bonus and sampling policy,
 $b(x_i)=\frac{\sqrt{w_0}}{\rho}$, $$\pi(x_i)=\frac{(B-\rho k)\sqrt{\tau(x_i)}}{2w_0\sum_{j=1}^N \sqrt{\tau(x_j)}}.$$
\end{example}

These examples reveal several key insights. First, when only some design parameters are controllable (as is often the case when market or institutional factors constrain $\rho$ or $b$), the remaining parameters optimally adjust to balance costs and variance reduction. Second, consistent with classical active learning~\citep{zrnic2024active}, the optimal sampling policy $\pi(\cdot)$ remains proportional to $\sqrt{\tau(X_i)}$, allocating more probability to inputs with larger prediction error. Crucially, however, the sentinel design $(\rho,b)$ jointly determines the overall scale of this allocation through the budget constraint, fundamentally coupling incentive and sampling considerations. 

\begin{figure*}[ht]
    \begin{center}
        \centering
        \includegraphics[width=\textwidth]{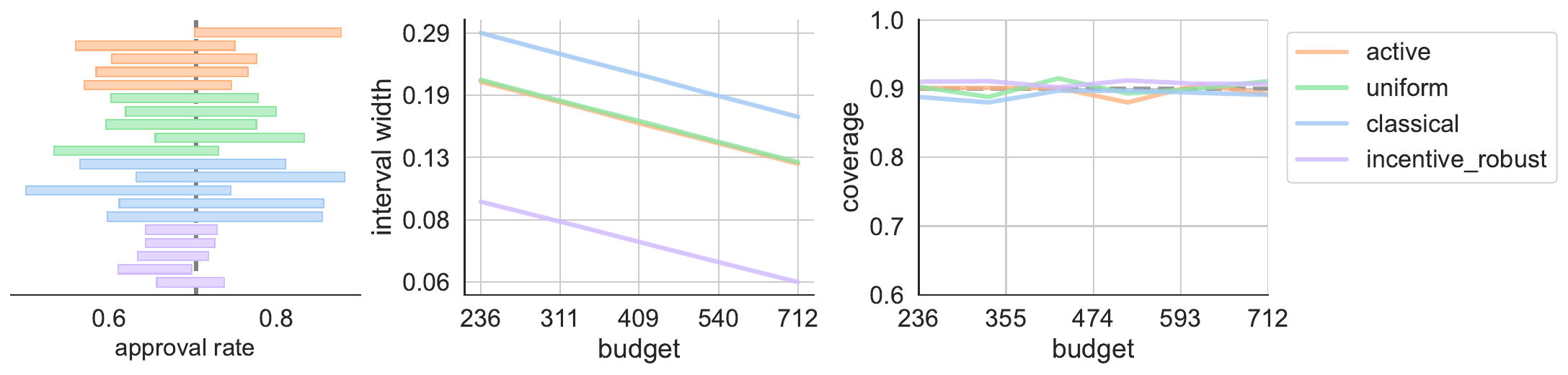}\\[1em]
        \includegraphics[width=\textwidth]{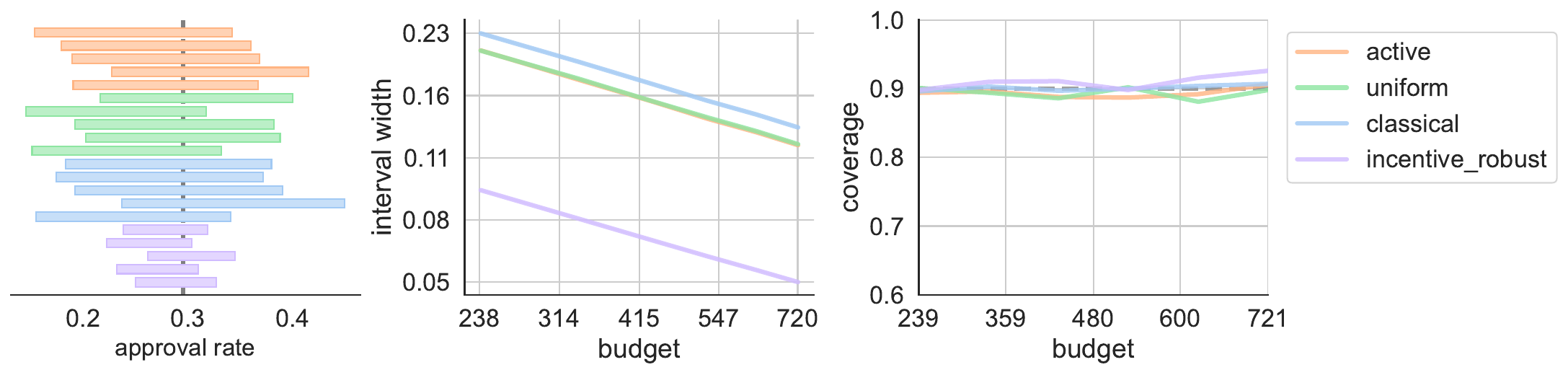}
        \caption{Comparison of inference performance for Biden approval (top row) and Trump approval (bottom row). The left panel displays representative confidence intervals from all methods; the middle panel shows the average interval width as a function of the budget; and the right panel reports empirical coverage, i.e., the proportion of intervals covering the true mean.}
        \label{fig:pew-main}
    \end{center}
\end{figure*}

\subsection{General M-estimation}
\label{subsec:m-estimation}
The mean estimation framework developed above extends naturally to general M-estimation problems, encompassing a broad class of statistical models including regression, classification, and density estimation. In standard M-estimation, the parameter \(\theta\) is defined as the minimizer of the population
risk based on the ground-truth label:
\[
L(\theta) = \mathbb{E}\left[\ell_\theta(X,Y^{\text{true}})\right],
\]
where \(\ell_\theta\) is a convex loss function. Many classical estimators, including least squares and logistic regression, can be cast in this form.
For notational convenience, define the per-sample losses
$\ell_{\theta,i} = \ell_\theta(X_i,Y_i)$,
$\ell_{\theta,i}^f = \ell_\theta(X_i,f(X_i))$,
and the corresponding gradients
$\nabla \ell_{\theta,i} = \nabla_\theta \ell_\theta(X_i,Y_i)$,
$\nabla \ell_{\theta,i}^f = \nabla_\theta \ell_\theta(X_i,f(X_i))$.

The key structural challenge remains: labels are effort-dependent and sampling costs vary with the incentive design. Following the mean estimation approach, we adopt an active sampling policy $\pi(X_i)=\pi^{\hat \eta}(X_i) = \hat \eta u(X_i)$ proportional to an uncertainty score $u(X_i)$, where $\hat \eta$ is determined jointly with $(\rho, b)$ through the budget constraint.

The incentive-aware M-estimator is defined as
\begin{equation}
\label{eqn:general_estimator}
\hat \theta^{\hat \eta} = {\arg\min}_\theta L^{\hat \eta}(\theta)
\end{equation}
where the weighted empirical loss is
\[
L^{\hat \eta}(\theta)
=
\frac{1}{n} \sum_{i=1}^n
\left(
\ell_{\theta,i}^f
+
(\ell_{\theta,i} - \ell_{\theta,i}^f)
\frac{\xi_i^{\hat \eta} \zeta_i/(1-\rho)}{\pi^{\hat \eta}(X_i)\,q(e(X_i))}
\right),
\]
where $\xi_i^{\hat \eta} \sim \mathrm{Bern}(\pi^{\hat \eta}(X_i))$. This objective directly generalizes~\eqref{eq:active-mean}: the baseline loss $\ell_{\theta,i}^f$ uses the AI prediction, while the residual $(\ell_{\theta,i} - \ell_{\theta,i}^f)$ corrects for AI errors on non-sentinel queries using the same reweighting structure.

Under standard regularity conditions, this estimator admits asymptotically normal inference. The following theorem establishes a central limit theorem for $\hat\theta^{\hat \eta}$.

\begin{theorem}
\label{thm:m_est_batch}
Assume the loss is smooth (Ass.~\ref{ass:smooth_loss}) and define the Hessian $H_{\theta^*} = \nabla^2 \E[\ell_{\theta^*}(X,Y^{\text{true}})]$. Suppose that there exists $\eta^*\in\cH$ such that $\P(\hat\eta\neq \eta^*)\to 0$. Then, if $\hat\theta^{\eta^*}\cp \theta^*$, we have 
$$\sqrt{n}(\hat\theta^{{\hat\eta}} - \theta^*)\cd \cN(0,\Sigma_*), \text{ where }$$
   $\Sigma_* = H_{\theta^*}^{-1} \Var\left(\nabla \ell_{\theta^*,i}^f  + \left(\nabla \ell_{\theta^*,i} - \nabla \ell_{\theta^*,i}^f \right) \Gamma_i \right)H_{\theta^*}^{-1}$ and $\Gamma_i=\frac{\xi^{\eta^*}_i\zeta_i/(1-\rho)}{\pi_{\eta^*}(X_i)q(e(X_i))}$. Consequently, for any $\hat\Sigma\cp \Sigma_*$, $\C_\alpha = (\hat\theta^{{\hat\eta}}_j \pm z_{1-\alpha/2}\sqrt{\frac{\hat\Sigma_{jj}}{n}})$ is a valid $(1-\alpha)$-confidence interval for $\theta^*_j$:
   $$\lim_{n \rightarrow \infty} \,\, \P(\theta^*_j\in \C_\alpha) = 1-\alpha.$$
\end{theorem}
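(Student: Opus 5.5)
The plan is to follow the standard M-estimation argument used for prediction-powered and active inference (as in \citet{zrnic2024active}). It has two stages. First, establish the CLT for the oracle estimator $\hat\theta^{\eta^*}$, which uses the fixed limiting sampling rule $\pi_{\eta^*}$. Second, transfer the result to $\hat\theta^{\hat\eta}$ using $\P(\hat\eta\neq\eta^*)\to 0$.

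The transfer step is immediate. On the event $\{\hat\eta=\eta^*\}$ the two estimators coincide, provided the sampling indicators $\xi_i^{\eta}$ are coupled across $\eta$, for instance as $\xi_i^\eta=\mathbbm{1}\{U_i\le \pi_\eta(X_i)\}$ with common uniforms $U_i$. Hence $\sqrt{n}(\hat\theta^{\hat\eta}-\hat\theta^{\eta^*})\cp 0$, and Slutsky's lemma gives the same limit. So the substantive work is the fixed-$\eta^*$ case.

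For fixed $\eta^*$, the summands
\[
g_{\theta,i}=\nabla \ell^f_{\theta,i}+(\nabla\ell_{\theta,i}-\nabla\ell^f_{\theta,i})\Gamma_i
\]
are i.i.d. across $i$, because $(X_i,Y_i^{\text{true}},\xi_i,\zeta_i,Y_i)$ are i.i.d. We first verify unbiasedness, $\E[g_{\theta,i}]=\nabla L(\theta)$, by conditioning on $X_i$. The key input is that the residual term can be nonzero only on a regular sampled task, i.e.\ $\xi_i\zeta_i=1$. Given $X$, this event has probability $\pi(X)(1-\rho)$. On it, $Y_i=Y^{\text{true}}_i$ with probability $q(e(X))$ and $Y_i=f(X_i)$ otherwise, in which case $\nabla\ell_{\theta,i}-\nabla\ell^f_{\theta,i}=0$. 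This uses the paper's modeling choice that an undetected AI error leaves the submitted label equal to $f(X_i)$. Therefore
\[
\E[(\nabla\ell_{\theta,i}-\nabla\ell^f_{\theta,i})\Gamma_i\mid X_i]=\E[\nabla\ell_\theta(X_i,Y^{\text{true}}_i)-\nabla\ell^f_{\theta,i}\mid X_i],
\]
which gives unbiasedness. The effort $e(X)$ is deterministic given $X$ by Theorem~\ref{theo:optimal-effort}, and it is bounded below by $e_{\min}$, so $q(e(X))\ge q(e_{\min})>0$. Together with $\pi_{\eta^*}$ bounded away from $0$ on the relevant support (or an integrability condition in Ass.~\ref{ass:smooth_loss}), this makes $\Gamma_i$ bounded, so $g_{\theta^*,i}$ has finite second moments. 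The multivariate CLT then gives
\[
\sqrt n\,\nabla L^{\eta^*}(\theta^*)\cd\cN\bigl(0,\Var(g_{\theta^*,i})\bigr).
\]

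Next we linearize. The first-order condition is $\nabla L^{\eta^*}(\hat\theta^{\eta^*})=0$. A Taylor/mean-value expansion gives
\[
0=\nabla L^{\eta^*}(\theta^*)+\nabla^2L^{\eta^*}(\tilde\theta)(\hat\theta^{\eta^*}-\theta^*).
\]
The assumed consistency $\hat\theta^{\eta^*}\cp\theta^*$ and the smoothness assumption (locally Lipschitz Hessian, or the van der Vaart Theorem~5.23 style Lipschitz-gradient condition) give a uniform law of large numbers for the weighted Hessian in a neighborhood of $\theta^*$. The same conditional-expectation argument as above shows this Hessian converges to $H_{\theta^*}$. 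Invertibility of $H_{\theta^*}$ then yields
\[
\sqrt n(\hat\theta^{\eta^*}-\theta^*)=-H_{\theta^*}^{-1}\sqrt n\,\nabla L^{\eta^*}(\theta^*)+o_p(1),
\]
hence the sandwich form $\Sigma_*$. The confidence-interval statement follows from Slutsky with $\hat\Sigma\cp\Sigma_*$.

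The main obstacle is the remainder control: making the Hessian uniform convergence rigorous when the summands carry the random weight $\Gamma_i$, and when $\ell$ may be non-twice-differentiable. I would handle this with the Lipschitz-in-$\theta$ stochastic-equicontinuity route of van der Vaart Theorem~5.23, using that $\Gamma_i$ is bounded (by the $e_{\min}$ bound and the lower bound on $\pi$) to inherit the envelope condition from Ass.~\ref{ass:smooth_loss}.
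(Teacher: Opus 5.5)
Your plan is correct. Its skeleton matches the paper's, which follows Theorem~1 of \citet{zrnic2024active}: an oracle criterion at fixed $\eta^*$, unbiasedness by conditioning on $X$, a CLT for the weighted score, the sandwich variance, and Slutsky.

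The real difference is the linearization. Your primary route is the first-order condition $\nabla L^{\eta^*}(\hat\theta^{\eta^*})=0$ plus a mean-value expansion with a uniform LLN for the weighted empirical Hessian. That route is not licensed by Assumption~\ref{ass:smooth_loss}. It needs $\theta\mapsto\ell_\theta(x,y)$ to be twice differentiable near $\theta^*$ for every $(x,y)$, with an integrable Hessian envelope. It also needs coordinatewise intermediate points, since no single $\tilde\theta$ works for a vector-valued gradient. The assumption gives only differentiability at $\theta^*$, local Lipschitzness with square-integrable constants, and Hessians of the population risks $L$ and $L^f$.

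Under the stated hypotheses, your fallback is therefore the operative argument, and it is exactly the paper's:
\begin{itemize}
\item The weighted loss is Lipschitz with constant $|1-\Gamma_i|\,C(X_i)+\Gamma_i\,C(X_i,Y_i)$. This is square-integrable because your $\Gamma_i$ is bounded.
\item That Lipschitz bound gives the stochastic-equicontinuity expansion $n\E_n[L^{\eta^*}_{\theta^*+h_n/\sqrt n}-L^{\eta^*}_{\theta^*}]=\tfrac12 h_n^\top H_{\theta^*}h_n+h_n^\top\GG_n[\nabla L^{\eta^*}_{\theta^*}]+o_P(1)$.
\item Comparing $\hat h_n=\sqrt n(\hat\theta^{\hat\eta}-\theta^*)$ with $\tilde h_n=-H_{\theta^*}^{-1}\GG_n[\nabla L^{\eta^*}_{\theta^*}]$ and completing the square gives $\hat h_n-\tilde h_n\cp 0$.
\end{itemize}
On this route you need unbiasedness at the level of the loss, $\E[L^{\eta^*}_{\theta,i}]=L(\theta)$ for $\theta$ near $\theta^*$, so that the deterministic part has Hessian $H_{\theta^*}$. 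Your conditioning argument gives this verbatim, but you stated it only for gradients.

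Your lower bound on $\pi_{\eta^*}$ is an added hypothesis. The theorem tacitly needs something like it for $\Sigma_*$ and the envelope to be finite.

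What each approach buys:
\begin{itemize}
\item Your Hessian route gets $\sqrt n$-tightness for free from consistency plus Hessian convergence, at the price of per-sample smoothness.
\item The criterion route covers nonsmooth losses but needs a separate tightness step for $\sqrt n(\hat\theta-\theta^*)$. The paper asserts this follows from consistency alone. Invoking van der Vaart's Theorem~5.23 as a whole, as you propose, supplies it properly, since that proof obtains tightness from the rate result, Corollary~5.53.
\end{itemize}
Your transfer step has the two estimators coincide on $\{\hat\eta=\eta^*\}$ under a common-uniform coupling of the $\xi_i^\eta$. This is equivalent to the paper's substitution of $\eta^*$ for $\hat\eta$ inside the criterion expansion. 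That substitution silently relies on the same coupling, so making it explicit, as you do, is an improvement.
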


\section{Experiments}\label{sec:exp}
We build on the experimental framework of \citet{zrnic2024active} and evaluate our method on two datasets: the post-election survey data and the protein data. 
In both settings, a fixed budget is used to recruit human agents for labeling and conduct downstream statistical inference. 
We compare with three baselines: \emph{classical}, which uses the observed labels $Y_i$ directly; \emph{uniform}, which uniformly samples human-labeled data and uses $f(X_i)$ for variance reduction; and \emph{active}, which also uses $f(X_i)$ for variance reduction but adaptively queries data points following \citet{zrnic2024active}.

% We compare against three baseline methods: \emph{classical}, \emph{uniform}, and \emph{active}. The classical baseline directly uses the observed labels $Y_i$ and performs inference based on them. The uniform and active methods use the AI prediction $f(X_i)$ to reduce variance in the estimator, but differ in how they collect data. The uniform method samples data uniformly, without accounting for heterogeneity across units. The active method is the active statistical inference procedure of \citet{zrnic2024active}, which adaptively selects data points to query human agents.

For all three baseline methods, payments are set to induce human effort level $e = 0.8$. Since human agents are not perfectly accurate at this effort level, the resulting estimators from these methods are biased. To enable a fair comparison, we implement an additional debiasing step for all three methods. Implementation details are provided in Appendix~\ref{app:experiment_details}. We use $q(e)=e$ and $c(e)=\tfrac{1}{2}e^2$ and assume risk-neutral utility $\mu(b)=b$.

\begin{figure*}[t]
    \begin{center}
        \centering
        \includegraphics[width=\textwidth]{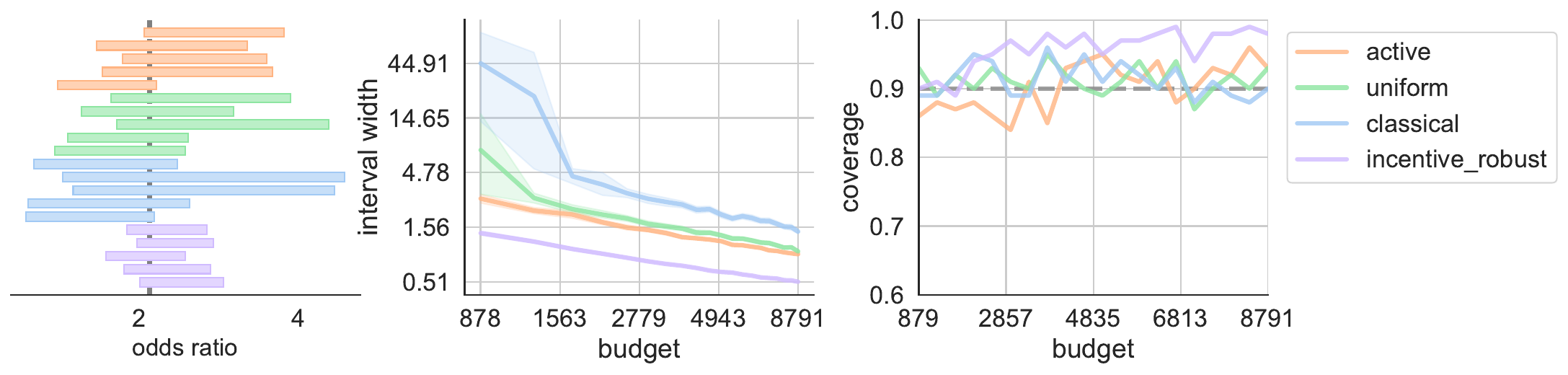}
        \caption{Comparison of inference performance for the odds ratio. The left panel displays representative confidence intervals from all methods; the middle panel shows the average interval width as a function of the budget; and the right panel reports empirical coverage, i.e., the proportion of intervals covering the true odds ratio.}
        \label{fig:protein-main}
    \end{center}
\end{figure*}

\subsection{Post-Election Survey}

The Pew Research Center's post-2020 election survey dataset~\citep{pew_atp_wave79_2020} contains individuals' demographic information (age, gender, etc.) as well as their approval ratings of post-election political messages from the presidential candidates. Following \citet{zrnic2024active}, we process the data so that the approval variable $Y$ is a binary response in $\{0,1\}$. The goal is to estimate $\E(Y)$ using the demographic variables and to construct confidence intervals.

Assuming error-free human agents and uniform labeling costs, \citet{zrnic2024active} show that active selection applied on top of XGBoost yields narrower confidence intervals than uniform random sampling. We extend their analysis by demonstrating that even with strategic human agents, their active approach remains robust (after the debiasing step). Furthermore, our proposed method further reduces the interval length, outperforming both uniform sampling and the classical method (which relies solely on AI predictions) under the same budget constraint.

In Figure~\ref{fig:pew-main}, we report the confidence interval width and empirical coverage of the four methods under a fixed budget: our incentive-aware active statistical inference method, the active method of \citet{zrnic2024active}, and the uniform and classical baselines. All methods achieve the nominal significance level. Our incentive-aware active sampling produces substantially shorter confidence intervals than all other methods under the same budget. Moreover, the plain active sampling method still outperforms the uniform and classical baselines. 

In Figure~\ref{fig:pew-saved}, we plot the percentage of budget saved to achieve a fixed confidence interval length, compared to the three baseline methods. Our proposed incentive-aware active statistical inference framework reduces the required budget by approximately 80\%.

\subsection{Protein Data}

% Inspired by \citet{bludau2022structural} and \citet{angelopoulos2023prediction}, \citet{zrnic2024active} study the odds ratio (i.e., the ratio of probabilities) of a protein being phosphorylated and belonging to an intrinsically disordered region (IDR), using the proteomics dataset from \citet{angelopoulos2023prediction}. IDR is a structural property of proteins and is costly to measure experimentally. \citet{bludau2022structural} use the AI tool AlphaFold~\citep{jumper2021alphafold} to predict this property; however, since the predictions are not perfectly accurate, \citet{angelopoulos2023prediction} construct more reliable confidence intervals by incorporating a subset of accurate measurements obtained via uniform sampling. \citet{zrnic2024active} then apply their active sampling strategy to improve upon uniform sampling, demonstrating gains under the assumption of error-free labeling and uniform labeling costs.

We use the proteomics dataset from \citet{angelopoulos2023prediction}, following \citet{zrnic2024active}. The task is to estimate the odds ratio between a protein being phosphorylated and belonging to an intrinsically disordered region (IDR). IDR is a structural property of proteins and is costly to measure experimentally, making this dataset a useful benchmark for evaluating budget-constrained inference methods. 

\citet{bludau2022structural} use AlphaFold~\citep{jumper2021alphafold} to predict IDR, while \citet{angelopoulos2023prediction} and \citet{zrnic2024active} improve inference using uniform and active sampling, respectively, under error-free and uniform-cost labeling assumptions.

Our experiments confirm that even with strategic human agents, the method of \citet{zrnic2024active} continues to outperform its baselines, and our proposed approach yields further substantial improvements, as shown in Figure~\ref{fig:protein-main}. From another perspective, Figure~\ref{fig:protein-saved} shows that, to achieve the same confidence interval width, our method reduces the required budget by approximately 70\% relative to all baselines.

\section{Conclusion}\label{sec:conc}

This paper resolves the incentive collapse paradox through a sentinel-auditing mechanism that sustains positive human effort at finite cost, regardless of AI accuracy. Our theoretical analysis establishes impossibility results for accuracy-based payments and demonstrates that sentinel auditing fundamentally alters the incentive landscape by decoupling effort rewards from baseline AI error rates.

Building on this foundation, we developed an incentive-aware active statistical inference framework that jointly optimizes auditing and adaptive sampling under budget constraints. This framework explicitly accounts for the interaction between sampling policy, payment structure, and human agent behavior. Our experiments on survey and proteomics data confirm that this joint optimization yields substantial improvements over methods that treat labeling costs as uniform.

Several directions warrant further exploration. First, our framework assumes homogeneous human agents with known effort-accuracy relationships; extending to heterogeneous or learning settings presents interesting challenges. Second, while we focus on statistical estimation, the principles apply broadly to any setting where AI-assisted human decisions require quality control. Finally, the sentinel mechanism itself could be enhanced through adaptive or personalized designs that respond to individual human agent characteristics.

\newpage
\section*{Impact Statement}
This work addresses incentive design for human agents in AI-assisted labeling systems. While our mechanisms improve efficiency and label quality, they involve monitoring and payment structures that affect agent compensation. Practitioners should ensure fair wages, transparent policies, and respect for agent autonomy when implementing such systems.

\bibliography{ref}
\bibliographystyle{icml2026}

%%%%%%%%%%%%%%%%%%%%%%%%%%%%%%%%%%%%%%%%%%%%%%%%%%%%%%%%%%%%%%%%%%%%%%%%%%%%%%%
%%%%%%%%%%%%%%%%%%%%%%%%%%%%%%%%%%%%%%%%%%%%%%%%%%%%%%%%%%%%%%%%%%%%%%%%%%%%%%%
% APPENDIX
%%%%%%%%%%%%%%%%%%%%%%%%%%%%%%%%%%%%%%%%%%%%%%%%%%%%%%%%%%%%%%%%%%%%%%%%%%%%%%%
%%%%%%%%%%%%%%%%%%%%%%%%%%%%%%%%%%%%%%%%%%%%%%%%%%%%%%%%%%%%%%%%%%%%%%%%%%%%%%%
\newpage
\appendix
\onecolumn

\section{Proofs}

\subsection{Proof of Theorem \ref{thm:impossible_linear} and Theorem \ref{thm:impossible}}

We first prove the general result Theorem \ref{thm:impossible}.
\begin{proof}[Proof of Theorem \ref{thm:impossible}]
    Since $W = v(Z_1, Z_2, \dots, Z_n, \epsilon)$ and the distribution of the $Z_i$’s is fully parameterized by $p_{\text{output}}$, the expected utility of the payment, $\mathbb{E}[\mu(W)]$, can be written in the following form:
$\mathbb{E}[\mu(W)] = \mu(w(p_{\text{output}})),$
for some function $w(\cdot)$. Recall that $p_{\text{output}} = 1 - p + p q(e)$ is the labeling accuracy. In particular, since $Z_1, \dots, Z_n \sim \operatorname{Bern}(p_{\text{output}})$, $v$ is monotone increasing in the $Z_i$’s, and $\mu$ is increasing and differentiable, the function $w(\cdot)$ is increasing and differentiable. This is because: as $p_{\text{output}}$ increases, the joint distribution of $(Z_1,\dots,Z_n)$ shifts upward in the sense of first-order stochastic dominance. Since $v$ is coordinatewise monotone, the expectation $\mathbb{E}[v(Z_1,\dots,Z_n,\epsilon)]$ is increasing in $p_{\text{output}}$, and because it is a finite polynomial in $p_{\text{output}}$, it is differentiable; composing with the differentiable function $\mu$ preserves differentiability.

    Given effort $e$, the human agent’s expected payoff is
\[
\mathbb{E}[\mu(W)] - c(e)
= \mu(w(p_{\text{output}})) - c(e)
= \mu(w(1 - p + p q(e))) - c(e).
\]
To characterize the maximizer, we differentiate with respect to $e$ and obtain
\[
p q'(e)\, w'(1 - p + p q(e))\, \mu'(w(1 - p + p q(e))) - c'(e).
\]
Under standard curvature conditions, this derivative is nonincreasing in $e$ and therefore can cross zero at most once. If the maximizer $e^\star$ satisfies $e^\star > e_{\min}$, then the derivative evaluated at $e_{\min}$ must be nonnegative, i.e.,
\begin{equation}\label{eq:marginal}
p q'(e_{\min})\, w'(1 - p + p q(e_{\min}))\, \mu'(w(1 - p + p q(e_{\min}))) \ge c'(e_{\min}).    
\end{equation}

Note that this needs to hold for every $p \in [0,1]$. 
    % Under mild assumption, to force a equilibrium of $e$ and $e\geq e_{min}$, we need
    % $$
    % pq^\prime(e_{\min})\mu^\prime(w(1-p+pq(e_{\min})))w^\prime(1-p+pq(e_{\min}))\geq c^\prime(e_{\min}),
    % $$
    % for any $p \in [0,1]$.
    Thus 
    $$
    \mu^\prime(w(x))w^\prime(x) \geq C \frac{1}{1-x},  
    $$
    for any $x\in [0,1]$.
    Here $C$ is a constant related to $e_{\min}$.
    Thus
    $$
    \mu(w(x)) \geq C\log (\frac{1}{1-x}). 
    $$
    Thus
    $$w(x) \geq \mu^{-1}(C\log(\frac{1}{1-x})).$$
    Bringing in $x=p_{\text{output}}=1-p+pq(e)$,
    we know 
    $$
    w(p_{\text{output}})\geq \mu^{-1}(C\log(\frac{1}{p(1-q(e))})) \geq \mu^{-1}(C\log(\frac{1}{p})).
    $$
    Since $\mu$ is concave, we know
    $$\mu(w(p_{\text{output}}))=\E [\mu(W)] \leq  \mu(\E[W]),$$ and thus
    $$
\E[W] \geq w(p_{\text{output}}) \geq \mu^{-1}\left(C\log(\frac{1}{p})\right).
    $$
\end{proof}

Now let us look at the linear special case.
\begin{proof}[Proof of Theorem \ref{thm:impossible_linear}]
Note that in this special case, $w$ is linear in $p_{\text{output}}$ and thus from Equation \eqref{eq:marginal}, we have
$$
w'(1 - p + p q(e_{\min}))\, \ge \frac{1}{p q'(e_{\min})}c'(e_{\min}).
$$

Thus
$$
\E[W] = w(p_{\text{output}}) \geq C\frac{1}{p}.
$$
\end{proof}

\subsection{Proof of Theorem \ref{theo:optimal-effort}}
\begin{proof}[Proof of Theorem \ref{theo:optimal-effort}]
With a minor modification, the equation from the proof of \ref{thm:impossible} applies here as well. We omit the details for brevity.    
\end{proof}

\subsection{Proof of Theorem \ref{thm:mean_batch_inference}}
\begin{proof}[Proof of Theorem \ref{thm:mean_batch_inference}]
    \textbf{Step 1. } We first show this estimator is unbiased.
Note that  
   $$
   \begin{aligned}
       &\E \left(f(X) + (Y - f(X)) \frac{\xi \zeta/(1-\rho)}{\pi(X)q(e(X))}\right)\\
       &=\E f(X) + \E [(Y - f(X))\frac{1}{q(e(X))}] .\\
       %&=\E Y +\E (Y - f(X))\frac{1-q(e(X))}{q(e(X))} \\
       %&=\theta^*-\E(1-q(e(X))) (Y_{\text{true}} - f(X))\mathbf{1}_{w(X)=1}+\E (Y - f(X))\frac{1-q(e(X))}{q(e(X))}\mathbf{1}_{w(X)=1,a(X)=1} \\
      % &=\theta^*
   \end{aligned}
   $$
   Let event $A_1$ be AI producing wrong result and event $A_2$ be human agent correcting the wrong answer.
   We claim that in fact, $\mathbb E[Y-f(x)|X] = q(e(X)) \mathbb E[Y^\text{true} - f(x)|X].$ This is because
   $$
   \begin{aligned}
   \mathbb E[Y-f(X)|X] &= \mathbb E [\mathbbm{1}_{A_1\cap A_2} (Y^\text{true} - Y^\text{false})|X] \\
   &= \mathbb E[\mathbbm{1}_{A_1\cap A_2}\ |\ X\ ]
\ \E[\  Y^\text{true} - Y^\text{false}\ |\ X\ ]\\
   &= \mathbb E [\ \mathbb E[\mathbbm{1}_{A_2}\ |\ X,A_1\ ] \mathbb E[\ \mathbbm{1}_{A_1}\ |\ X\ ] (Y^\text{true} - Y^\text{false})|\ X\ ]\\
   &= q(e(X))\mathbb E [\  \mathbbm{1}_{A_1} (Y^\text{true} - Y^\text{false})|\ X\ ]\\
   &= q(e(X)) \mathbb E [ (Y^\text{true} - f(X))|X].
   \end{aligned}
   $$
   The last equality follows because when $A_1$ happens, we have  $f(X) = Y^\text{false}$. Thus, $\mathbb E[Y-f(x)|X] = q(e(X)) \mathbb E[Y^\text{true} - f(x)|X]$ and we get
   $$
   \E [(Y - f(X))\frac{1}{q(e(X))}] = \mathbb E[Y^\text{true} - f(x)] = \theta^\ast - \E(f(X)).
   $$
    Therefore this estimator is unbiased.

\textbf{Step 2. } Applying CLT and we finish the proof.
\end{proof}

\subsection{Proof of Lemma \ref{lemma:minimize_var}}
\begin{proof}[Proof of Lemma \ref{lemma:minimize_var}]
Minimizing the variance is equivalent to minimizing the second moment:
\begin{equation}
\begin{aligned}
&\min \E\left(f(X) + (Y - f(X))\frac{\xi \zeta/(1-\rho)}{\pi(X)q(e(X))}\right)^2 \\
&\Leftrightarrow \min \E\left(\frac{(Y - f(X))^2 \xi^2 \zeta^2/(1-\rho)^2}{\pi(X)^2q(e(X))^2}+2\frac{ (Y - f(X))f(X)\xi \zeta/(1-\rho) }{\pi(X)q(e(X))}\right) \\
&\Leftrightarrow \min \E\left(\frac{(Y - f(X))^2}{(1-\rho)\pi(X)q(e(X))^2}+2 \frac{(Y - f(X))f(X)}{q(e(X))}\right) \\
&\Leftrightarrow \min \E\left(\frac{(Y^\text{true} - f(X))^2}{(1-\rho)\pi(X) q(e(X))}+2 (Y^\text{true} - f(X))f(X)\right) \\
&\Leftrightarrow \min \E\frac{(Y^\text{true} - f(X))^2}{(1-\rho)\pi(X)q(e(X))}.\\
\end{aligned}
\end{equation}
Here we used $\E[\xi \zeta|X] = (1-\rho)\pi(X)$. The cross term is independent of $\pi$ and $\rho$.
\end{proof}

\subsection{Proof of Example \ref{exp:bconstant1} and Example \ref{exp:bconstant2}}
\begin{proof}[Proof of Example \ref{exp:bconstant1}]
    Notice that $e$ is constant and fixed in this setting. Thus $\pi$ is the only variable in this setting. Then the optimization problem is easy to solve.
\end{proof}

\begin{proof}[Proof of Example \ref{exp:bconstant2}]
    Notice that from Theorem \ref{theo:optimal-effort}, we know $e(X) \equiv \rho \mu(b)$ is constant. Thus the optimization problem is equivalent to
    \[
\begin{aligned}
\min_{\pi,\rho}\quad 
& \frac{1}{(1-\rho)\rho}\E\frac{(Y^\text{true} - f(X))^2}{\pi(X)} \\
\text{s.t.}\quad 
& \sum_{i=1}^n \Bigl(\rho^2 b \mu(b) + w_0\Bigr)\,\pi(X_i) \;+\; \rho k \;\le\; B .
\end{aligned}
\]
and for fixed $\rho$, it can be shown that the optimal $\pi$ satisfying
$$
\pi(X)=\lambda \sqrt{\tau(X)},
$$
where \[
\tau(X_i)\triangleq \E\!\left[(Y^\text{true}-f(X))^2\mid X=X_i\right].
\]

Thus the optimization problem is equivalent to
    \[
\begin{aligned}
\min_{\lambda,\rho}\quad 
& \frac{1}{(1-\rho)\rho \lambda} \\
\text{s.t.}\quad 
& \sum_{i=1}^n \Bigl(\rho^2 b \mu(b) + w_0\Bigr)\,\lambda \sqrt{\tau(X_i)} \;+\; \rho k \;\le\; B .
\end{aligned}
\]

Thus the optimal $\lambda$ is
    \[
\lambda= \frac{ B-\rho k}{\Bigl(\rho^2 b \mu(b) + w_0\Bigr)\sum_{i=1}^n\sqrt{\tau(X_i)}}
\]
Thus the optimization problem is equivalent to
    \[
\begin{aligned}
\min_{\rho}\quad 
& \frac{\rho^2 b \mu(b) + w_0}{(1-\rho)\rho (B-\rho k)} .
\end{aligned}
\]
When $k$ is negligible,
the optimization problem reduce to
    \[
\begin{aligned}
\min_{\rho}\quad 
& \frac{\rho^2 b \mu(b) + w_0}{(1-\rho)\rho} .
\end{aligned}
\]
Take the derivative regard to $\rho$, we know
the optimal $\rho$ is 
$$\rho=\frac{-w_0+\sqrt{w_0(w_0+b \mu(b))}}{b \mu(b)}.$$
At this time,
   \[
\pi(X)= \frac{ B \sqrt{\tau(X)}}{\Bigl(\rho^2 b \mu(b) + w_0\Bigr)\sum_{i=1}^n\sqrt{\tau(X_i)}}.
\]
\end{proof}
\subsection{Proof of Example \ref{exp:risk-neutral}}
From Theorem \ref{theo:optimal-effort}, we know a rational human agent would choose $e(X_i)$ satisfying
$$
\rho \mu(b(X_i)) q^\prime (e(X_i)) = c^\prime(e(X_i)).
$$

Thus let $g=(c^\prime)^{-1}$, we have $e(X_i)=(c^\prime)^{-1}(\rho \mu(b(X_i)))=g(\rho \mu(b(X_i)))$. Then the question changes to 
$$
\begin{aligned}
        &\min_{(\pi,\rho, b)}\E\frac{(Y^\text{true} - f(X))^2}{(1-\rho)\pi(X)g(\rho \mu(b(X)))} \\
        & \text{s.t. }  \sum_{i=1}^N\rho b(X_i) g(\rho \mu(b(X_i)))\pi(X_i)+\rho k+w_0\sum \pi(X_i) \le B.
\end{aligned}
    $$

Since $\mu(x)=x$ and $c(x)=\frac{1}{2}x^2$, we know this is equivalent to 
$$
\begin{aligned}
        &\min_{(\pi,\rho, b)}\frac{1}{(1-\rho)\rho}\E\frac{(Y^\text{true} - f(X))^2}{\pi(X) b(X)} \\
        & \text{s.t. }  \sum_{i=1}^N b(X_i)^2\pi(X_i)+\frac{1}{\rho^2}w_0\sum\pi(X_i)\le \frac{1}{\rho^2}(B-\rho k ).
\end{aligned}
$$

Therefore $g(\rho\mu(b(X)))=\rho b(X)$, and the objective becomes
\[
\E\left[\frac{(Y^\text{true}-f(X))^2}{(1-\rho)\,\pi(X)\,\rho b(X)}\right]
=\frac{1}{(1-\rho)\rho}\E\left[\frac{(Y^\text{true}-f(X))^2}{\pi(X)b(X)}\right].
\]
The constraint becomes
\[
\sum_{i=1}^N \rho\, b(X_i)\cdot (\rho b(X_i))\cdot \pi(X_i) + \rho k + w_0\sum_{i=1}^N \pi(X_i)
=
\rho^2\sum_{i=1}^N b(X_i)^2\pi(X_i) + \rho k + w_0\sum_{i=1}^N \pi(X_i)
\le B,
\]
which is equivalent (for $\rho>0$) to
\[
\sum_{i=1}^N b(X_i)^2\pi(X_i) + \frac{w_0}{\rho^2}\sum_{i=1}^N \pi(X_i)
\le \frac{B-\rho k}{\rho^2}.
\]
Hence the problem is equivalent to
\[
\begin{aligned}
        &\min_{(\pi,\rho,b)}\ \frac{1}{(1-\rho)\rho}\E\left[\frac{(Y^\text{true} - f(X))^2}{\pi(X)\, b(X)}\right] \\
        & \text{s.t. } \sum_{i=1}^N b(X_i)^2\pi(X_i)+\frac{1}{\rho^2}w_0\sum_{i=1}^N\pi(X_i)\le \frac{1}{\rho^2}(B-\rho k ).
\end{aligned}
\]

Fix any $\rho\in(0,1)$ with $B-\rho k>0$. 
Then
\[
\E\left[\frac{(Y^\text{true}-f(X))^2}{\pi(X)b(X)}\right]
=\sum_{i=1}^N \frac{\tau(X_i)}{\pi(X_i)\,b(X_i)}.
\]
Since the multiplicative factor $\frac{1}{(1-\rho)\rho}$ does not affect the minimizer in $(\pi,b)$ for fixed $\rho$, it suffices to solve
\[
\begin{aligned}
\min_{\{\pi(X_i)>0,b(X_i)>0\}_{i=1}^N}\quad & \sum_{i=1}^N \frac{\tau(X_i)}{\pi(X_i)\,b(X_i)}\\
\text{s.t.}\quad &
\rho^2\sum_{i=1}^N b(X_i)^2\pi(X_i)+w_0\sum_{i=1}^N \pi(X_i)\le B-\rho k.
\end{aligned}
\]
At an optimum with $\sum_{i=1}^N \tau(X_i)>0$, the constraint binds: if it were slack, one could scale
$\pi(X_i)\leftarrow (1+\epsilon)\pi(X_i)$ for all $i$ (keeping $b$ fixed), which strictly decreases the objective
while preserving feasibility for sufficiently small $\epsilon>0$.

Introduce a Lagrange multiplier $\lambda\ge 0$ and consider the Lagrangian
\[
\mathcal{L}(\pi,b,\lambda)
=\sum_{i=1}^N \frac{\tau(X_i)}{\pi(X_i)b(X_i)}
+\lambda\left(\rho^2\sum_{i=1}^N b(X_i)^2\pi(X_i)+w_0\sum_{i=1}^N \pi(X_i)-(B-\rho k)\right).
\]
Assume an interior optimum where $\pi(X_i)>0$ and $b(X_i)>0$ for all $i$ with $\tau(X_i)>0$.
The first-order conditions for each $i$ are
\[
\frac{\partial \mathcal{L}}{\partial \pi(X_i)}
=
-\frac{\tau(X_i)}{\pi(X_i)^2 b(X_i)}
+\lambda\left(\rho^2 b(X_i)^2+w_0\right)=0,
\]
and
\[
\frac{\partial \mathcal{L}}{\partial b(X_i)}
=
-\frac{\tau(X_i)}{\pi(X_i)\, b(X_i)^2}
+\lambda\left(2\rho^2 b(X_i)\pi(X_i)\right)=0.
\]
From the first condition,
\begin{equation}\label{eq:FOCpi}
\frac{\tau(X_i)}{\pi(X_i)^2 b(X_i)}
=\lambda\left(\rho^2 b(X_i)^2+w_0\right).
\end{equation}
From the second condition,
\begin{equation}\label{eq:FOCb}
\frac{\tau(X_i)}{\pi(X_i)\, b(X_i)^2}
=2\lambda\rho^2 b(X_i)\pi(X_i).
\end{equation}
Multiplying \eqref{eq:FOCb} by $\pi(X_i)$ yields
\begin{equation}\label{eq:pi2fromb}
\frac{\tau(X_i)}{b(X_i)^2}
=2\lambda\rho^2 b(X_i)\pi(X_i)^2,
\qquad\text{so}\qquad
\pi(X_i)^2=\frac{\tau(X_i)}{2\lambda\rho^2 b(X_i)^3}.
\end{equation}
Similarly, rearranging \eqref{eq:FOCpi} gives
\begin{equation}\label{eq:pi2frompi}
\pi(X_i)^2=\frac{\tau(X_i)}{\lambda b(X_i)\left(\rho^2 b(X_i)^2+w_0\right)}.
\end{equation}
Equating \eqref{eq:pi2fromb} and \eqref{eq:pi2frompi} and canceling the common factor $\tau(X_i)/(\lambda b(X_i))>0$,
we obtain
\[
\frac{1}{2\rho^2 b(X_i)^2}=\frac{1}{\rho^2 b(X_i)^2+w_0},
\]
which implies
\[
\rho^2 b(X_i)^2+w_0=2\rho^2 b(X_i)^2
\quad\Longrightarrow\quad
\rho^2 b(X_i)^2=w_0
\quad\Longrightarrow\quad
b(X_i)=\frac{\sqrt{w_0}}{\rho}.
\]
Hence the optimal $b(\cdot)$ is constant over $\{X_1,\dots,X_N\}$:
\[
b^\star(X_i)=\frac{\sqrt{w_0}}{\rho}\qquad \forall i.
\]

Substituting $b^\star(X_i)=\frac{\sqrt{w_0}}{\rho}$ into \eqref{eq:FOCpi}, note that
$\rho^2 b^\star(X_i)^2=w_0$ and thus $\rho^2 b^\star(X_i)^2+w_0=2w_0$. Therefore
\[
\frac{\tau(X_i)}{\pi(X_i)^2\, b^\star(X_i)}
=\lambda\cdot 2w_0,
\qquad\text{so}\qquad
\pi(X_i)=\sqrt{\frac{\tau(X_i)}{2\lambda w_0\, b^\star(X_i)}}.
\]
Because $b^\star(X_i)$ is constant in $i$, this shows
\[
\pi^\star(X_i)\propto \sqrt{\tau(X_i)}.
\]
To determine the proportionality constant, use that the budget constraint binds at the optimum:
\[
\rho^2\sum_{i=1}^N b^\star(X_i)^2\pi^\star(X_i)+w_0\sum_{i=1}^N \pi^\star(X_i)=B-\rho k.
\]
Since $\rho^2 b^\star(X_i)^2=w_0$, the left-hand side equals
\[
w_0\sum_{i=1}^N \pi^\star(X_i)+w_0\sum_{i=1}^N \pi^\star(X_i)
=2w_0\sum_{i=1}^N \pi^\star(X_i),
\]
hence
\[
\sum_{i=1}^N \pi^\star(X_i)=\frac{B-\rho k}{2w_0}.
\]
Let $r_i\triangleq \sqrt{\tau(X_i)}$ and $R\triangleq \sum_{j=1}^N r_j$. Then the unique proportional allocation
satisfying the above sum constraint is
\[
\pi^\star(X_i)=\frac{B-\rho k}{2w_0}\cdot \frac{r_i}{R}
=\frac{B-\rho k}{2w_0}\cdot \frac{\sqrt{\tau(X_i)}}{\sum_{j=1}^N \sqrt{\tau(X_j)}}.
\]
This completes the derivation of the optimal $(\pi,b)$ for fixed $\rho$ under the stated interior conditions.

\subsection{Proof of Theorem \ref{thm:m_est_batch}}
\begin{proof}
We follow the similar proof process of Theorem 1 in \citep{zrnic2024active}.
 We first formally stating the required smoothness assumption.
\begin{assumption}[Smoothness]
\label{ass:smooth_loss}
The loss $\ell$ is smooth if:
\begin{itemize}
\item $\ell_\theta(x,y)$ is differentiable at  $\theta^*$ for all $(x,y)$;
\item $\ell_\theta$ is locally Lipschitz around $\theta^*$: there is a neighborhood of $\theta^*$ such that $\ell_\theta(x,y)$ is $C(x,y)$-Lipschitz and $\ell_\theta(x,f(x))$ is $C(x)$-Lipschitz in $\theta$, where $\E[C(X,Y^{\text{true}})^2] < \infty, \E[C(X)^2] < \infty$;
\item $L(\theta) = \E[\ell_\theta(X,Y^{\text{true}})]$ and $L^f(\theta) = \E[\ell_\theta(X,f(X))]$ have Hessians, and $H_{\theta^*} = \nabla^2 L(\theta^*)\succ 0$.
\end{itemize}
\end{assumption}

 let $L_{\theta,i}^\eta = \ell_\theta(X_i,f(X_i)) + \left(\ell_\theta(X_i,Y_i) - \ell_\theta(X_i, f(X_i)) \right) \frac{\xi^\eta_i \zeta_i/(1-\rho)}{\pi_{\eta}(X_i)q(e(X_i))}$, where $\pi_{\eta}(X_i)=\eta u(X_i)$ and $\xi^\eta_i\sim B(\pi_{\eta}(X_i))$. We define $\nabla L_{\theta,i}^\eta$ analogously, replacing the losses with their gradients. Given a function $g$, let
\begin{align*}
\GG_n[g(L_{\theta}^\eta)] &:= \frac{1}{\sqrt{n}} \sum_{i=1}^n \left(g(L_{\theta,i}^\eta) - \E[g(L_{\theta,i}^\eta)]\right); \quad 
\E_n[g(L_{\theta}^\eta)] := \frac{1}{n} \sum_{i=1}^n g(L_{\theta,i}^\eta).
\end{align*}
We similarly use $\GG_n[g(\nabla L_{\theta}^\eta)]$, $\E_n[g(\nabla L_{\theta}^\eta)]$, etc.
Notice that $\E_n[L_\theta^{\hat \eta}] = L^{\pi_{\hat\eta}}(\theta)$.

By the differentiability and local Lipschitzness of the loss, for any $h_n = O_P(1)$ we have
$$\GG_n[\sqrt{n}(L_{\theta^* + h_n/\sqrt{n}}^{\eta^*} - L_{\theta^*}^{\eta^*}) - h_n^\top \nabla L_{\theta^*}^{\eta^*}] \stackrel{p}{\to} 0.$$
By definition, this is equivalent to
\begin{align*}
n\E_n[L_{\theta^* + h_n/\sqrt{n}}^{\eta^*} - L_{\theta^*}^{\eta^*}] &= n(L(\theta^* + h_n/\sqrt{n}) - L(\theta^*)) + h_n^\top \GG_n [\nabla L_{\theta^*}^{\eta^*}] + o_P(1),
\end{align*}
where $L(\theta) = \E[\ell_\theta(X,Y^{\text{true}})]$ is the population loss.
A second-order Taylor expansion now implies
$$n\E_n[L_{\theta^* + h_n/\sqrt{n}}^{\eta^*} - L_{\theta^*}^{\eta^*}]  = \frac{1}{2} h_n^\top H_{\theta^*} h_n +  h_n^\top \GG_n[\nabla L_{\theta^*}^{\eta^*}] + o_P(1).$$
At the same time, since $\P(\hat\eta\neq \eta^*) \to 0$, we have
$$n\E_n[L_{\theta^* + h_n/\sqrt{n}}^{\hat \eta} - L_{\theta^*}^{\hat \eta}] = n\E_n[L_{\theta^* + h_n/\sqrt{n}}^{\eta^*} - L_{\theta^*}^{\eta^*}] + o_P(1).$$
Putting everything together, we have shown
$$n\E_n[L_{\theta^* + h_n/\sqrt{n}}^{\hat \eta} - L_{\theta^*}^{\hat \eta}] = \frac{1}{2} h_n^\top H_{\theta^*} h_n +  h_n^\top \GG_n [\nabla L_{\theta^*}^{\eta^*}] + o_P(1).$$
We apply the previous display with $h_n = \hat h_n := \sqrt{n}(\hat\theta^{\hat\eta} - \theta^*)$ (which is $O_P(1)$ by the consistency of $\hat\theta^{\eta^*}$; $h_n = \tilde h_n := -H_{\theta^*}^{-1}\GG_n[\nabla L_{\theta^*}^{\eta^*}]$: 
\begin{align*}
n\E_n[L_{\hat\theta^{\hat\eta}}^{\hat \eta} - L_{\theta^*}^{\hat \eta}] &= \frac{1}{2} \hat h_n^\top H_{\theta^*} \hat h_n +  \hat h_n^\top \GG_n [\nabla L_{\theta^*}^{\eta^*}] + o_P(1);\\
n\E_n[L_{\theta^* + \tilde h_n/\sqrt{n}}^{\hat \eta} - L_{\theta^*}^{\hat \eta}] &= \frac{1}{2} \tilde h_n^\top H_{\theta^*} \tilde h_n +  \tilde h_n^\top \GG_n [\nabla L_{\theta^*}^{\eta^*}] + o_P(1).
\end{align*}
By the definition of $\hat\theta^{\hat\eta}$, the left-hand side of the first equation is smaller than the left-hand side of the second equation. Therefore, the same must be true of the right-hand sides of the equations. If we take the difference between the equations and complete the square, we get
$$\frac 1 2 \left(\sqrt{n}(\hat\theta^{\hat\eta} - \theta^*) - \tilde h_n \right)^\top H_{\theta^*}\left(\sqrt{n}(\hat\theta^{\hat\eta} - \theta^*) - \tilde h_n\right) + o_P(1)\leq 0.$$
Since the Hessian $H_{\theta^*}$ is positive-definite, it must be the case that $\sqrt{n}(\hat\theta^{\hat\eta} - \theta^*) - \tilde h_n \stackrel{p}{\to} 0$. By the central limit theorem, $\tilde h_n = - H_{\theta^*}^{-1}\GG_n[\nabla L_{\theta^*}^{\eta^*}]$ converges to $\mathcal N(0,\Sigma_*)$ in distribution, where 
$$\Sigma_* = H_{\theta^*}^{-1} \Var\left(\nabla \ell_{\theta^*}(X,f(X))  + \left(\nabla \ell_{\theta^*}(X,Y) - \nabla \ell_{\theta^*}(X, f(X)) \right) \frac{\xi^{\eta^*}\zeta/(1-\rho)}{\pi_{\eta^*}(X)q(e(X))} \right)H_{\theta^*}^{-1}.$$
The final statement thus follows by Slutsky's theorem.

\end{proof}

% \section{Additional Details of Sentinel Tasks}\label{app:sentinel-practical}
\section{Identifiability of Sentinel Tasks}
\label{app:sentinel-practical}

A practical concern is whether strategic annotators can identify sentinel tasks and exert effort only on those tasks. In the main analysis, we focus on the idealized case where the sentinel status is not directly observable. In this section, we discuss when this assumption is plausible, how the mechanism extends when annotators can form imperfect posterior beliefs, and when identifiability becomes a limitation.

The identifiability of sentinel tasks depends on how they are constructed. The most favorable case is the corrected-history construction, where sentinel tasks are drawn from past instances on which the AI system made genuine mistakes. In this case, the AI output shown to the annotator is a real historical AI error. Therefore, there is no special pattern that distinguishes sentinel tasks from ordinary AI-assisted tasks. 

A second construction deliberately modifies the AI suggestion before presenting it to the annotator. This construction is more scalable, but it requires more care. When the modification preserves the style of the AI output and produces errors that are plausible under the AI's natural error distribution, identifying sentinel status remains difficult.

If the induced errors are too different from the natural AI error distribution, annotators may learn to recognize them. However, this concern is less severe in settings where the cost of checking the AI output is much larger than the cost of correcting an identified error.
For example, in proofreading or polishing AI-generated text, the main effort often lies in carefully reading the text and identifying subtle factual, grammatical, or stylistic issues; once such an issue is found, making the correction is usually relatively easy.
In such settings, the main role of sentinel auditing is to incentivize annotators to verify the AI output carefully, rather than to perfectly match the natural distribution of AI errors. Once an error is detected, correcting it is relatively inexpensive. Therefore, even if sentinel errors are somewhat easier to correct than natural AI errors, they can still serve their incentive purpose as long as detecting them requires the same substantive verification effort.

For continuous-valued labels, this creates a margin-of-error tradeoff: if the modification is too small, a high-effort worker may reasonably regard the original AI answer as acceptable; if the modification is too large, the sentinel may become obvious. Thus, for real-valued tasks, sentinel construction should respect the task-specific tolerance level and produce errors that are detectable under careful verification but not obvious without effort.

We next discuss the intermediate case where annotators can form imperfect beliefs about whether a task is sentinel. As long as they cannot perfectly identify sentinel tasks (i.e., their belief is strictly less than 1), the incentive effect can still be sustained. Suppose that for a non-sentinel task, the annotator believes it is sentinel with probability $\rho_{\mathrm{posterior}}(X)$. Then, similarly to Theorem 4.1, the first-order condition becomes
$$
\rho_{\mathrm{posterior}}(X) \mu(b(X)) q’(e(X)) = c'(e(X)),
$$
so a feasible effort level $e$ for non-sentinel tasks can still be supported.

Therefore, imperfect identifiability does not eliminate incentives. It changes the effective auditing probability from the design parameter $\rho$ to the worker's posterior belief $\rho_{\mathrm{posterior}}$. As long as this posterior belief is not zero on ordinary tasks, a positive effort level can still be supported.

The mechanism fails only in the extreme case where annotators can perfectly identify sentinel tasks before exerting effort. If $\rho_{\mathrm{posterior}}=0$ for non-sentinel tasks, then the worker has no incentive to exert effort on non-sentinel tasks. 

To further examine this extension, we add a sensitivity analysis in which the annotator's posterior belief is set to $\rho_{\mathrm{posterior}}=0.8\rho$. All other experimental parameters are kept the same as in the main experiments. The results are shown in Figure~\ref{fig:rho_post}. As expected, the performance is somewhat worse than in the case where sentinel tasks are fully indistinguishable ($\rho_{\mathrm{posterior}}=\rho$), but our method still outperforms the baselines. 

\begin{figure*}[t]
    \centering
    \includegraphics[width=\textwidth]{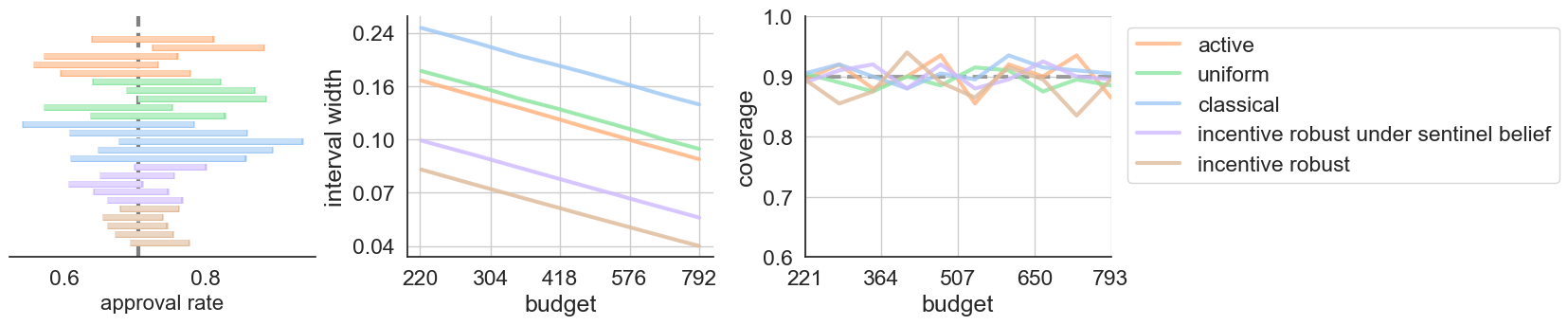}
    \caption{Comparison of inference performance for Biden approval with posterior belief. The left panel displays
representative confidence intervals from all methods; the middle panel shows the average interval width as a function of the budget; and
the right panel reports empirical coverage, i.e., the proportion of intervals covering the true mean.}
    \label{fig:rho_post}
\end{figure*}

\section{Extension to Heterogeneous Task Difficulty}
\label{app:heterogeneous}

The main text focuses on a homogeneous task setting for clarity. 
In practice, however, tasks may differ in difficulty, and the cost of verifying labels or constructing sentinel tasks may also vary across tasks. 
Our framework can be extended to this setting in a straightforward way.

Suppose each task \(X_i\) has a difficulty level \(h(X_i)\in\{1,\ldots,H\}\). 
We allow the effort--accuracy function, auditing probability, bonus, and sentinel-construction cost to depend on difficulty:
\[
q_i(e)=q(e(X_i),h(X_i)),\qquad 
\rho_i=\rho(h(X_i)),\qquad 
b_i=b(X_i,h(X_i)),\qquad 
k_i=k(h(X_i)).
\]
Then the incentive condition in Theorem~\ref{theo:optimal-effort} still holds accordingly.
Thus, harder tasks can be assigned larger bonuses or higher auditing probabilities in order to induce the desired effort level.

At the same time, the cost of obtaining ground truth or constructing sentinel tasks can also depend on difficulty, represented by $k(h(X_i))$. This changes the budget constraint to 
$$
\mathbb E\left[\sum_{i=1}^n \Big(\rho(h(X_i)) b(X_i, h(X_i)) q(e(X_i), h(X_i))\xi_i + w_0 \xi_i+ \rho(h(X_i)) k(h(X_i))\Big) \right] \le B.
$$

To solve the above optimization problem, one possible approach is to decompose it into $H$ subproblems, each corresponding to a difficulty level. The budget allocation then proceeds in two stages: in the first stage, the total budget is allocated across difficulty levels; in the second stage, we apply the results from the current version of the paper (for homogeneous difficulty) to optimize the mechanism within each difficulty class.
We leave a full characterization of the optimal heterogeneous policy to future work.
\section{Experiment Details}
\label{app:experiment_details}

Codes of our experiments can be found at\\ \url{https://github.com/su-z/overcoming-the-incentive-collapse-paradox}.

This section details the experimental setup (for both experiments), the simulation of the labeling process, and the specific functional forms used for the economic components of the model. We adapt the experimental setup of \citet{zrnic2024active} to our settings, introducing new methods.

\begin{algorithm}
\caption{Incentive-Robust Active Statistical Inference}
\label{alg:incentive-robust}
\begin{algorithmic}[1]
\REQUIRE Unlabeled data $\{(X_i, f(X_i))\}_{i=1}^n$, budget $B$, sentinel rate $\rho \in (0,1)$, cost parameters $w_0, k$, error level $\alpha$
\ENSURE Estimate $\hat{\theta}^\pi$ and confidence interval

\STATE \textbf{Step 1: Design incentive mechanism}
\STATE Compute per-instance uncertainty measure $\ell(x_i)$
\STATE Set bonus $b = \frac{\sqrt{w_0}}{\rho}$ and induced effort $e = \rho b$
\STATE Compute expected cost per sample: $c_S = w_0 + \rho k$

\STATE \textbf{Step 2: Allocate sampling budget}
\STATE Compute unscaled weights: $\tilde{\pi}_i = \sqrt{\ell(x_i)}$
\STATE Compute expected total cost: $C_{\text{exp}} = \sum_{i=1}^n \tilde{\pi}_i c_S$
\STATE Compute scaling factor: $\lambda = B / C_{\text{exp}}$
\STATE Set sampling probabilities: $\pi(x_i) = \min(1, \lambda \tilde{\pi}_i)$

\STATE \textbf{Step 3: Sample and query human agents}
\FOR{$i = 1$ to $n$}
    \STATE Draw sampling indicator: $\xi_i \sim \text{Bernoulli}(\pi(x_i))$
    \IF{$\xi_i = 1$}
        \STATE Draw sentinel indicator: $\zeta_i \sim \text{Bernoulli}(1 - \rho)$
        \IF{$\zeta_i = 0$} 
            \STATE Create sentinel task (force AI error)
        \ENDIF
        \STATE human agent exerts effort $e$ and submits label $Y_i$
        \STATE Labeling accuracy: $1 - p_i(1 - q(e))$
        \IF{$\zeta_i = 0$ and $Y_i$ is correct}
            \STATE Pay bonus $b$
        \ENDIF
        \STATE Pay base payment $w_0$
    \ENDIF
\ENDFOR

\STATE \textbf{Step 4: Compute corrected estimator}
\STATE Estimate parameter via equation (\ref{eq:active-mean}).

\STATE \textbf{Step 5: Construct confidence interval}
\STATE Compute asymptotic variance $\hat{\Sigma}$ via Theorem~\ref{thm:m_est_batch}
\STATE \textbf{return} $\hat{\theta}^\pi$ and CI: $\left(\hat{\theta}^\pi_j \pm z_{1-\alpha/2}\sqrt{\frac{\hat{\Sigma}_{jj}}{n}}\right)$
\end{algorithmic}
\end{algorithm}

For models predicting binary outcomes $Y^\text{true} \in \{0,1\}$, our model might predict a real number between 0 and 1. Denote this number by $\hat{p}(X),$ which we treat as the model's probability to predict class 1. The probability of the model making a correct prediction given the true label $Y^\text{true}$ is given by the Bernoulli likelihood:
\begin{equation}
P(\text{correct} \mid \hat{p}, Y^\text{true}) = \hat{p} \cdot Y^\text{true} + (1-\hat{p}) \cdot (1-Y^\text{true}).
\end{equation}
Conversely, the probability of the AI making an error on an instance is:
\begin{equation}
p(X) = 1 - P(\text{correct} \mid \hat{p}, Y^\text{true}) = 1 - (\hat{p} \cdot Y^\text{true} + (1-\hat{p}) \cdot (1-Y^\text{true})).
\end{equation}

We simulate imperfect error detection by human agents. A human agent with effort level $e$ detects and corrects an AI error with probability $q(e)$. The label generation process for each instance is as follows:
\begin{enumerate}
    \item \textbf{AI Prediction:} The model outputs a probability $\hat{p}$ and makes a prediction $f(X) = \mathbb{I}(\hat{p} > 0.5)$.
    \item \textbf{Check for AI Error:} Determine whether the AI prediction is incorrect $ \mathbb{I}(f(X) \neq Y^\text{true})$.
    \item \textbf{Human Agent Correction:} If an error occurred ($\mathbb{I}(f(X) \neq Y^\text{true})=1$), the human agent detects and corrects it with probability $q(e)$. Sample a detection indicator $a \sim \text{Bernoulli}(q(e))$.
    \item \textbf{Final Label:}
    \begin{itemize}
        \item If $\mathbb{I}(f(X) \neq Y^\text{true})=1$ and $a=1$ (error detected and corrected), the human agent outputs the correct label: $Y = Y^\text{true}$.
        \item Otherwise (no error, or error not detected), the human agent accepts the AI prediction: $Y = f(X)$.
    \end{itemize}
\end{enumerate}
This process yields an empirical labeling accuracy of $P(Y \text{ correct}) = 1 - p(1-q(e))$, where $p$ is the AI error rate.

In our experiments, we take $q(e)=e, c(e)=\frac12 e^2,$ and risk-neutral utility function $\mu(b)=b$.

\paragraph{Baseline Methods.} \citet{zrnic2024active} introduced the parameter $\tau \in [0,1]$:
\begin{equation}
\pi(x_i) = (1-\tau)\pi^*_{\text{active}}(x_i) + \tau \cdot \frac{B}{n},
\end{equation}
where $\tau=0$ corresponds to pure active sampling and $\tau=1$ corresponds to uniform sampling, and tune $\tau$ to minimize variance in the post-election survey data experiment and hard code $\tau=0.5$ for the protein property experiment. Our incentive-robust method does not use $\tau$ but for comparison we still include $\tau$ for baselines.

The estimator given in \citet{zrnic2024active} has the following form:
$$
\hat{\theta} = \frac{1}{n} \sum_{i=1}^n \left( f(X_i) + (Y_i - f(X_i))\frac{\xi_i}{\pi(X_i)} \right).
$$
But this is only unbiased for labels that are completely accurate. In our experiments the label has errors, which introduce a bias. To correct this bias, for our baseline, we actually use the following estimator (in our case $e(X_i)$ is constant)
$$
\hat{\theta}_{\text{base}} = \frac{1}{n} \sum_{i=1}^n \left( f(X_i) + (Y_i - f(X_i))\frac{\xi_i}{\pi(X_i)q(e(X_i))} \right).
$$
For the uniform estimator, we simply set $\pi(X_i) \equiv \pi_{\text{unif}}$:
$$
\hat{\theta}_{\text{unif}} = \frac{1}{n} \sum_{i=1}^n \left( f(X_i) + (Y_i - f(X_i))\frac{\xi_i}{\pi_{\text{unif}}q(e(X_i))} \right).
$$
For the classical estimator, we assume a symmetric noise model (since we do not use $f$ to identify AI errors) and apply the standard correction:
$$
\hat{\theta}_{\text{class}} = \frac{1}{n} \sum_{i=1}^n \frac{\xi_i}{\pi_{\text{unif}}} \frac{Y_i + q(e(X_i)) - 1}{2q(e(X_i)) - 1}.
$$

\subsection{Robustness to Utility and Cost Misspecification}
\label{app:misspecification}

Our theoretical analysis assumes that the principal has a correctly specified model of the annotator's utility and cost functions. This assumption is useful for deriving the optimal bonus and auditing design, but it may be strong in practice. Annotator behavior can vary across workers, tasks, and platforms, and the true effort cost may not be known exactly.

In practical deployments, the principal can use historical annotation data, pilot studies, or platform-level behavioral data to estimate the relevant behavioral parameters. These estimates can also be updated adaptively over time. 

Importantly, the sentinel-auditing mechanism is not all-or-nothing with respect to model specification. Even when the utility and cost functions are misspecified, the mechanism can still create a direct incentive for annotators to verify the AI output. Misspecification may lead to a suboptimal choice of the bonus or auditing rate, but our method still guarantees non-vanishing human effort and outperforms baselines that ignore strategic behavior.

\paragraph{Experimental setup.}
The experimental setup follows the main experiment, including the same Pew ATP Wave 79 data, outcome preprocessing, covariates, prediction model, baseline methods, and evaluation metrics. In the main experiment, the principal designs the mechanism under the behavioral model \(\mu(b)=b\), \(q(e)=e\), and \(c(e)=e^2/2\), so the induced effort is predicted by the first-order condition \(\rho b=c'(e)=e\). In this sensitivity experiment, the principal still optimizes payments and sampling probabilities under this assumed model, but the simulated annotator follows a weaker true response: in each Monte Carlo trial, we draw \(k\sim\mathrm{Uniform}(0.25,0.75)\) and set the actual utility function to \(\mu_{\mathrm{actual}}(b)=kb\) and thus effort to \(k\) times the effort predicted by the assumed model. We report 90\% confidence intervals, average interval widths, and empirical coverage.
\begin{figure*}[h]
    \centering
    \includegraphics[width=\textwidth]{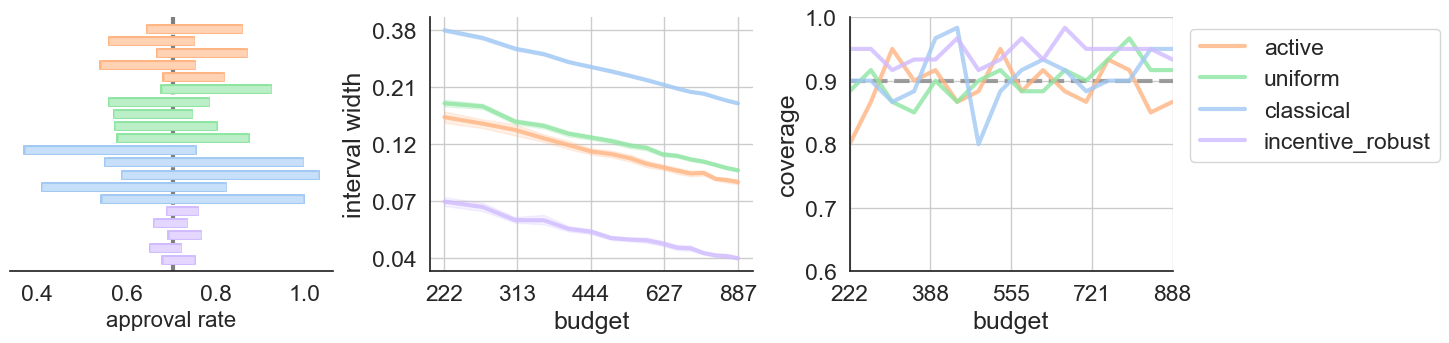}
    \caption{Comparison of inference performance for Biden approval. The left panel displays
representative confidence intervals from all methods; the middle panel shows the average interval width as a function of the budget; and
the right panel reports empirical coverage, i.e., the proportion of intervals covering the true mean.}
    \label{fig:misspecified}
\end{figure*}
Figure~\ref{fig:misspecified} reports a sensitivity analysis in which the mechanism is optimized under misspecified utility and cost functions. The results suggest that our method continues to induce non-vanishing human effort and yields tighter confidence intervals than baselines that ignore strategic behavior. This finding supports the main message of the paper: explicitly accounting for incentives, even with an imperfect behavioral model, is preferable to treating human labels as exogenous.

\subsection{Experiment with Continuous Outcomes}
\label{app:continuous-experiment}

Although our main experiments focus on binary outcomes, our theoretical framework applies to continuous outcomes. To demonstrate this, we conduct an additional experiment using the 2019 ACS PUMS data for California \citep{ding2021retiring}, where the outcome $Y$ is individual income. The ACS PUMS is a large-scale annual survey conducted by the U.S. Census Bureau and includes demographic variables such as age, sex, education, and income.

The target parameter is the coefficient of age in a linear regression of income on age and sex. This gives a standard continuous-outcome M-estimation problem, where the loss is the squared loss and the target parameter is defined by the population least-squares objective. We use the same experimental setup as in the main experiments: an AI prediction model first provides baseline predictions, and the principal allocates a fixed budget across sampling and sentinel auditing. We compare our incentive-aware method with the same baselines as before. For each budget level, we construct confidence intervals for the age coefficient and evaluate both interval width and empirical coverage across repeated Monte Carlo trials. As shown in Figure~\ref{fig:continuous}, our method yields tighter confidence intervals than baselines that ignore strategic behavior, supporting the applicability of our approach beyond discrete labeling tasks.

\begin{figure*}[h]
    \centering
    \includegraphics[width=\textwidth]{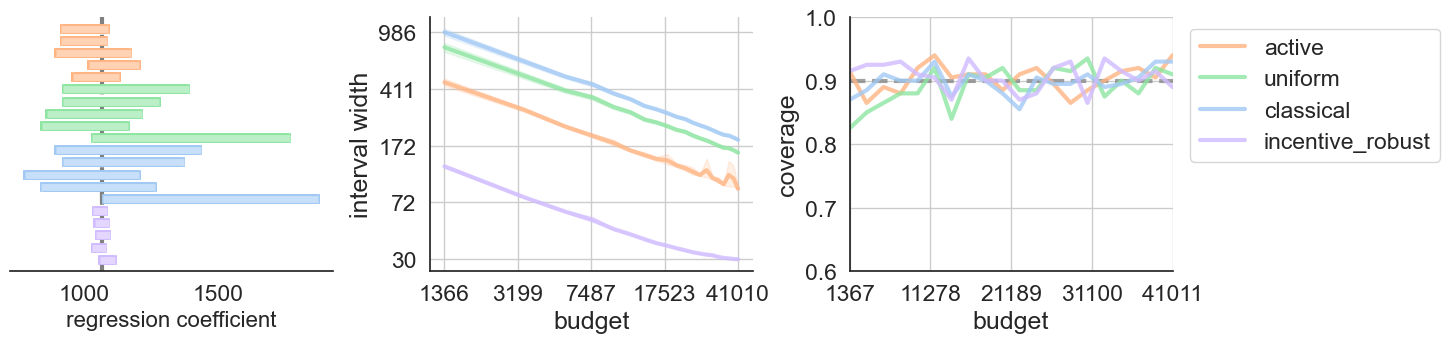}
    \caption{Comparison of inference performance for ACS PUMS data. The left panel displays
representative confidence intervals from all methods; the middle panel shows the average interval width as a function of the budget; and
the right panel reports empirical coverage, i.e., the proportion of intervals covering the true age coefficient.}
    \label{fig:continuous}
\end{figure*}

\subsection{More Experiment Results}

\begin{figure*}[h]
    \centering
    \includegraphics[width=0.48\textwidth]{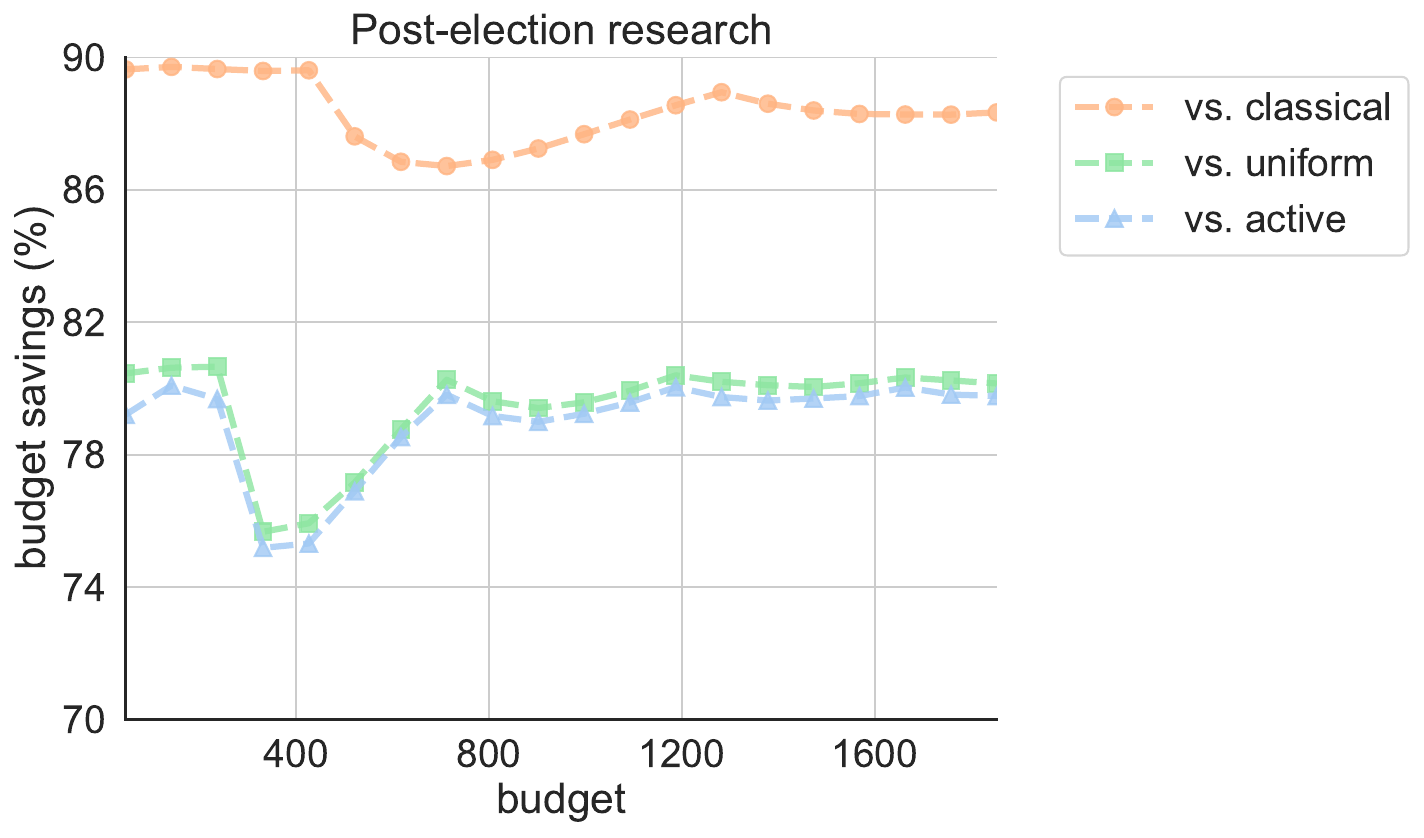}
    \hfill
    \includegraphics[width=0.48\textwidth]{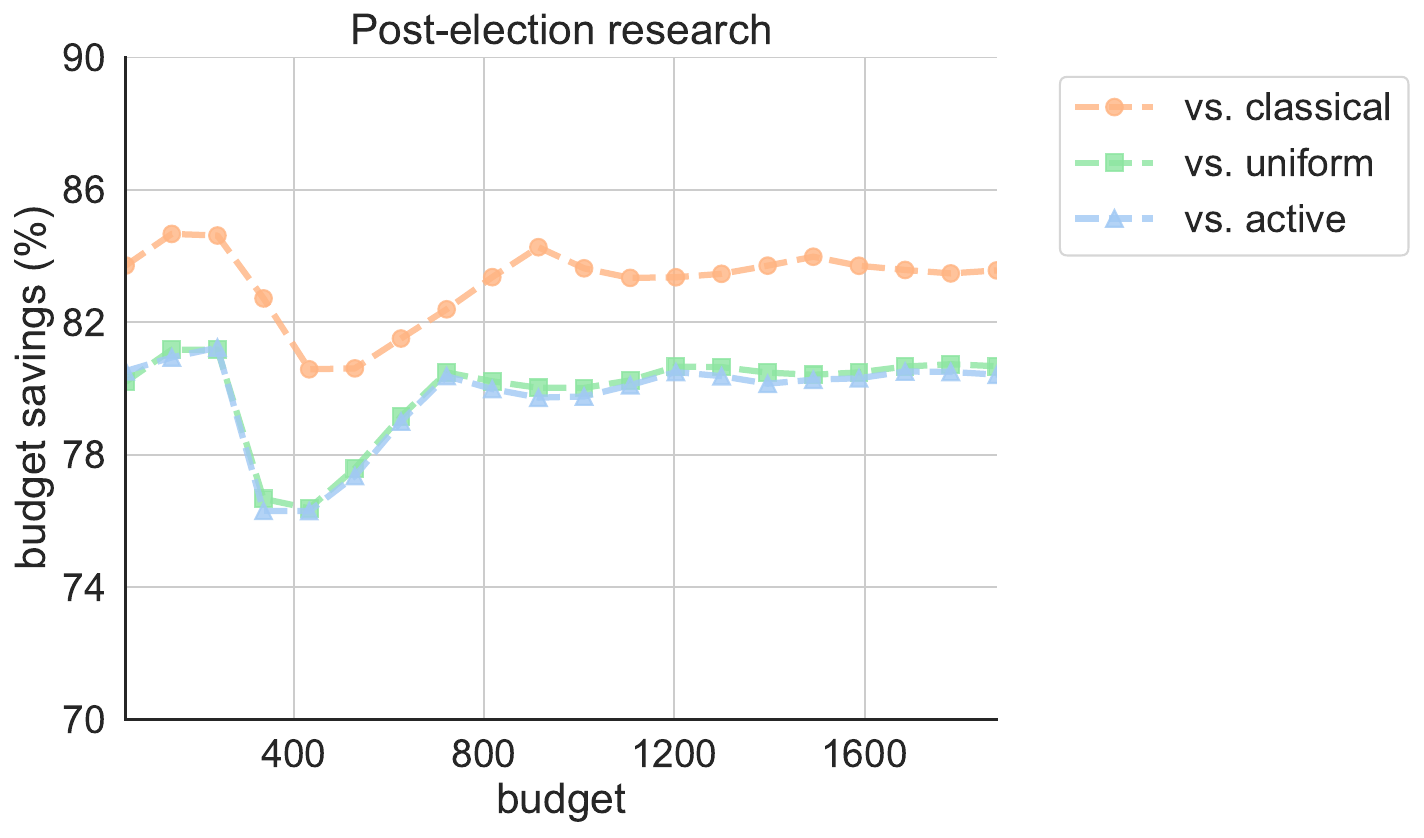}
    \caption{Percentage of budget saved by our method (relative to different baselines) when aiming to achieve the same performance at different budget levels.}
    \label{fig:pew-saved}
\end{figure*}

\begin{figure*}[t]
    \begin{center}
        \centering
        \includegraphics[width=0.5\textwidth]{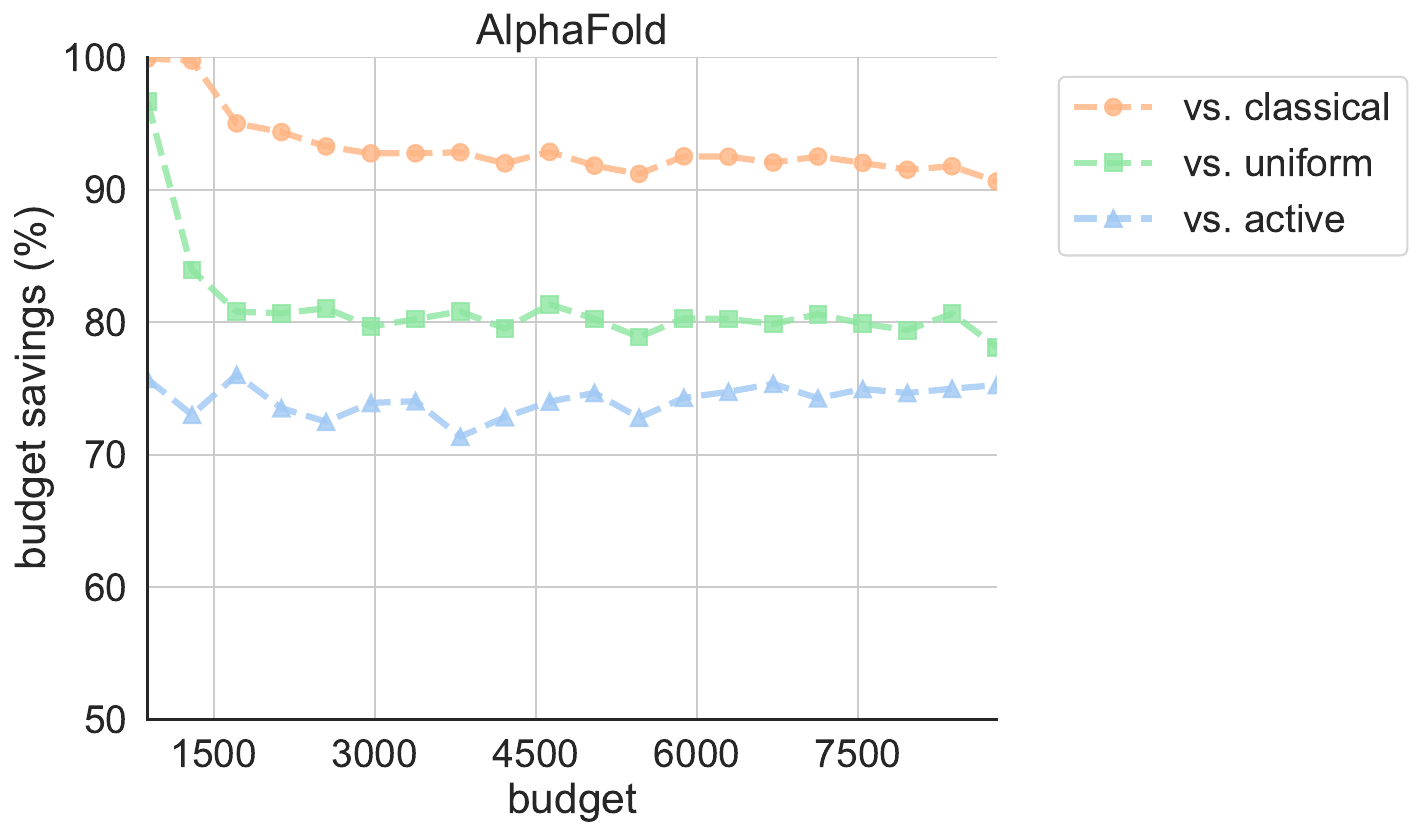}
        \caption{Percentage of budget saved, for the Alphafold experiment.}
        \label{fig:protein-saved}
    \end{center}
\end{figure*}
\end{document}